\newcolumntype{Y}{>{\tiny\raggedleft\arraybackslash}X}
\definecolor{darkblue}{rgb}{0,0,0.75}
\newtheorem{thm}{Theorem}
\newtheorem{thmrestate}{Theorem}
\newtheorem{prop}[thm]{Proposition}
\newtheorem{lm}[thm]{Lemma}
\newtheorem{defin}[thm]{Definition}
\DeclareMathOperator{\ld}{ld}
\newcommand{\NC}{NC}
\newcommand{\R}{\mathbb{R}}
\newcommand{\N}{\mathbb{N}}
\newcommand{\dist}{\mathcal{D}}
\newcommand{\defemph}[1]{\emph{#1}}
\newcommand{\convhull}[1]{\langle #1 \rangle}
\newcommand{\mapping}[3]{#1\!: #2 \to #3}
\newcommand{\prob}{P}
\newcommand{\cores}{c}
\DeclareMathOperator*{\Exp}{\mathbb{E}}
\DeclareMathOperator*{\argmin}{\mathrm{argmin}}
\newcommand{\runtime}{T}
\newcommand{\algo}{\mathcal{L}}
\newcommand{\radonAlgo}{\mathcal{R}}
\newcommand{\confBase}{\delta}
\newcommand{\conf}{\Delta}
\newcommand{\loss}{\ell}
\newcommand{\bigo}{\mathcal{O}}
\newcommand{\dataset}{D}
\newcommand{\solution}{f}
\newcommand{\solutionset}{\mathcal{F}}
\newcommand{\samplesize}{N}
\newcommand{\basesamplesize}{n}
\newcommand{\radonPoint}{\mathfrak{r}}
\newcommand{\IRP}{\radonPoint_h} 
\newcommand{\eps}{\varepsilon}
\newcommand{\quality}{\mathcal{Q}}
\newcommand{\qualityOf}[1]{\quality\left( #1 \right)}
\newcommand{\inputspace}{\mathcal{X}}
\newcommand{\inputvar}{x}
\newcommand{\outputspace}{\mathcal{Y}}
\newcommand{\outputvar}{y}
\newcommand{\parallelSchemeName}{Radon machine}
\newcommand{\Comment}[1]{\ \emph{\# #1}}
\newcommand{\ueq}[1][]{%
	\if\relax\detokenize{#1}\relax
	\sbox0{$\underbrace{=}_{}$}%
	\mathrel{\mathmakebox[\wd0]{=}}
	\else
	\mathrel{\underbrace{=}_{\mathclap{#1}}}
	\fi}
\title{Effective Parallelisation for Machine Learning}
\author{
	Michael Kamp\\
	University of Bonn\\
	and	Fraunhofer IAIS\\
	\texttt{kamp@cs.uni-bonn.de} \\
	\And
	Mario Boley\\
	Max Planck Institute for Informatics\\ and Saarland University\\
	\texttt{mboley@mpi-inf.mpg.de} \\
	\And
	Olana Missura\\
	Google Inc.\\
	\texttt{olanam@google.com} \\
	\And
	Thomas G\"artner\\
	University of Nottingham\\
	\texttt{thomas.gaertner@nottingham.ac.uk} \\
}
\begin{document}

\maketitle
\begin{abstract}
We present a novel parallelisation scheme that simplifies the adaptation of learning algorithms to growing amounts of data as well as growing needs for accurate and confident predictions in critical applications.
In contrast to other parallelisation techniques, it can be applied to a broad class of learning algorithms without further mathematical derivations and without writing dedicated code, while at the same time maintaining theoretical performance guarantees.
Moreover, our parallelisation scheme is able to reduce the runtime of many learning algorithms to polylogarithmic time on quasi-polynomially many processing units.
This is a significant step towards a general answer to an open question 
on the efficient parallelisation of machine learning algorithms in the sense of Nick's Class (\NC).
The cost of this parallelisation is in the form of a larger sample complexity. Our empirical study confirms the potential of our parallelisation scheme with fixed numbers of processors and instances in realistic application scenarios.	
\end{abstract}

\section{Introduction}
This paper contributes a novel and provably effective parallelisation scheme for a broad class of learning algorithms.
The significance of this result is to allow the confident application of machine learning algorithms with growing amounts of data. 
In critical application scenarios, i.e., when errors have almost prohibitively high cost, this confidence is essential~\citep{nouretdinov2011machine, sommer2010outside}.
%
%
To this end, we consider the parallelisation of an algorithm to be effective if it achieves the same confidence and error bounds as the sequential execution of that algorithm in much shorter time.
Indeed, our parallelisation scheme can reduce the runtime of learning algorithms from polynomial to polylogarithmic. For that, it consumes more data and is executed on a quasi-polynomial number of processing units.
%


To formally describe and analyse our parallelisation scheme, we consider the regularised risk minimisation setting. For a fixed but unknown joint probability distribution $\dist$ over an \defemph{input space} $\inputspace$ and an \defemph{output space} $\outputspace$, a dataset 
$\dataset\subseteq\inputspace\times\outputspace$ of size $\samplesize\in\N$ drawn iid~from $\dist$, a convex \defemph{hypothesis space} $\solutionset$ of functions $\mapping{\solution}{\inputspace}{\outputspace}$, a loss function $\mapping{\loss}{\solutionset\times\inputspace\times\outputspace}{\R}$ that is convex in $\solutionset$, and a convex regularisation term $\mapping{\Omega}{\solutionset}{\R}$, \defemph{regularised risk minimisation algorithms} solve
\begin{equation}\label{eq:ml}
\algo(\dataset)=\argmin_{\solution\in\solutionset}\sum_{(\inputvar,\outputvar)\in\dataset}\loss\left(\solution,\inputvar,\outputvar\right) + \Omega(\solution)\enspace .
\end{equation}
The aim of this approach is to obtain a hypothesis $\solution\in\solutionset$ with small \defemph{regret}
\begin{equation}\label{eq:q}
\qualityOf{\solution} = 
\Exp \left[\loss\left(\solution,\inputvar,\outputvar\right)\right] - \argmin_{\solution'\in\solutionset} \Exp \left[ \loss\left(\solution',\inputvar,\outputvar\right) \right]\enspace .
\end{equation}

Regularised risk minimisation algorithms are typically
designed to be \defemph{consistent} and \defemph{efficient}.
They are consistent if there is a function $\mapping{\samplesize_0}{\R_+\times\R_+}{\R_+}$ such that for all $\eps > 0$, $\conf\in(0, 1]$, $\samplesize\in\N$ with $\samplesize \geq \samplesize_0(\eps,\conf)$, and training data $\dataset\sim\dist^N$, the probability of generating an $\eps$-bad hypothesis is no greater than $\conf$, i.e., 
\begin{equation}
\prob\left(\qualityOf{\algo(\dataset)}>\eps\right)\leq \conf\enspace .
\label{eq:epsdeltaguarantee}
\end{equation}
They are efficient if the \defemph{sample complexity} $\samplesize_0(\eps,\conf)$ is polynomial in $\sfrac{1}{\eps}, \log\sfrac{1}{\conf}$ and the \defemph{runtime complexity} $\runtime_\algo$ is polynomial in the sample complexity.
This paper considers the parallelisation of such consistent and efficient learning algorithms, e.g., support vector machines, regularised least squares regression, and logistic regression.  We additionally assume that data is abundant and that
$\solutionset$ can be parametrised in a fixed, finite dimensional Euclidean space $\R^d$ such that the convexity of the regularised risk minimisation problem (Equation~\ref{eq:ml}) is preserved.
In other cases, (non-linear) low-dimensional embeddings~\citep{balcan2016kernelpca,oglic2017nystrom} can preprocess the data to facilitate parallel learning with our scheme.
With slight abuse of notation, we identify the hypothesis space with its parametrisation.

%

The main theoretical contribution of this paper is to show that algorithms satisfying the above conditions can be parallelised 
\defemph{effectively}. We consider a parallelisation to be effective if the $(\eps,\conf)$-guarantees (Equation~\ref{eq:epsdeltaguarantee}) are achieved in time polylogarithmic in $\samplesize_0(\eps,\conf)$. 
The cost for achieving this reduction in runtime comes in the form of an increased data size and in the number of processing units used. For the parallelisation scheme presented in this paper, we are able to bound this cost by a quasi-polynomial in $\sfrac{1}{\eps}$ and $\log\sfrac{1}{\conf}$.
The main practical contribution of this paper is an effective parallelisation scheme that treats the underlying learning algorithm as a \defemph{black-box}, i.e., it can be parallelised without further mathematical derivations and without writing dedicated code.

Similar to averaging-based parallelisations~\citep{rosenblatt2016optimality, zinkevich/nips/2010, zhang_communication-efficient_2013}, we apply the underlying learning algorithm in parallel to random subsets of the data.
Each resulting hypothesis is assigned to a leaf of an aggregation tree which is then traversed bottom-up. Each inner node computes a new hypothesis that is a \defemph{Radon point}~\citep{radon1921mengen} of its children's hypotheses. 
In contrast to aggregation by averaging, the Radon point increases the confidence in the aggregate doubly-exponentially with the height of the aggregation tree.
We describe our parallelisation scheme, the~\emph{\parallelSchemeName}, in detail in Section~\ref{se:rp}.
Comparing the Radon machine to the underlying learning algorithm which is applied to a dataset of the size necessary to achieve the same confidence, we are able to show a reduction in runtime from polynomial to polylogarithmic in Section~\ref{se:rt}.

The empirical evaluation of the~\parallelSchemeName~in Section~\ref{sec:experiments} confirms its potential in practical settings. Given the same amount of data as the underlying learning algorithm, the~\parallelSchemeName~achieves a substantial reduction of computation time in realistic applications. Using $150$ processors, the~\parallelSchemeName~is between $80$ and around $700$-times faster
than the underlying learning algorithm on a single processing unit. Compared with parallel learning algorithms from Spark's MLlib, it achieves hypotheses of similar quality, while requiring only $15-85\%$ of their runtime.

\begin{algorithm}[b]
	\caption{Radon Machine}	
	\begin{algorithmic}[1]
		\REQUIRE{learning algorithm $\algo$, dataset $\dataset\subseteq\inputspace\times\outputspace$, Radon number $r\in\N$, and parameter $h\in\N$}
		\ENSURE{{hypothesis $\solution\in\solutionset$}}
		\STATE \textbf{divide} $\dataset$ into $r^h$ iid subsets $\dataset_i$ of roughly equal size\label{algo:lineDivideDataset}
		\STATE \textbf{run} $\algo$ in parallel to obtain $\solution_i=\algo(\dataset_i)$\label{algo:lineRun}
		\STATE $S \leftarrow \{\solution_1,\dots,\solution_{r^h}\}$\label{algo:lineDefineSet}	
		\FOR{$i=h-1,\dots,1$}\label{algo:forBegin}
		\STATE \textbf{partition} $S$ into iid subsets $S_1,\dots,S_{r^i}$ of size $r$\label{algo:linePartition} each
		\STATE \textbf{calculate} Radon points $\radonPoint(S_1),\dots,\radonPoint(S_{r^i})$ in parallel\label{algo:lineCalcRadon} \hspace{\stretch{1}}\Comment{see Definition~\ref{def:radonPoint} and Appendix~\ref{sec:app:radonPointConstr}}
		\STATE $S\leftarrow\{\radonPoint(S_1),\dots,\radonPoint(S_{r^i})\}$\label{algo:lineReplaceWithRadon}
		\ENDFOR\label{algo:forEnd}
		\STATE \textbf{return}  $\radonPoint(S)$
	\end{algorithmic}	
	\label{alg:radonParallel}
\end{algorithm}
%
Parallel computing~\citep{kumar1994introduction} and its limitations~\citep{greenlaw_limits_1995} have been studied for a long time in theoretical computer science~\citep{chandra1976alternation}. Parallelising polynomial time algorithms ranges from being `embarrassingly'~\citep{moler1986matrix} easy to being believed to be impossible. For the class of decision problems that are the hardest in P, i.e., for P-complete problems, it is believed that there is no efficient parallel algorithm in the sense of Nick's Class (\NC~\cite{cook1979deterministic}):
efficient parallel algorithms in this sense are those that can be executed in \emph{polylogarithmic time} on a \emph{polynomial number of processing units}.
Our paper thus contributes to understanding the extent to which efficient parallelisation of polynomial time learning algorithms is possible.
This connection and other approaches to parallel learning are discussed in Section~\ref{sec:discussion}.

\section{From Radon Points to Radon Machines}\label{se:rp}
%
%
The~\parallelSchemeName, described in Algorithm~\ref{alg:radonParallel}, first executes the underlying (base) learning algorithm on random subsets of the data to quickly achieve weak hypotheses and then iteratively aggregates them to stronger ones. Both the generation of weak hypotheses and the aggregation can be executed in parallel.
To aggregate hypotheses, we follow along the lines of the iterated Radon point algorithm which was originally devised to approximate the centre point, i.e., a point of largest Tukey depth~\citep{tukey1975mathematics}, of a finite set of points~\cite{clarkson1996approximating}. The Radon point~\cite{radon1921mengen} of a set of points is defined as follows:
\begin{defin}
	A \defemph{Radon partition} of a set $S\subset\solutionset$ is a pair $A,B\subset S$ such that $A\cap B = \emptyset$ but $\convhull{A}\cap\convhull{B}\neq\emptyset$, where $\convhull{\cdot}$ denotes the convex hull. The \defemph{Radon number} of a space $\solutionset$ is the smallest $r\in\N$ such that for all $S\subset\solutionset$ with $|S| \geq r$ there is a Radon partition; or $\infty$ if no $S\subset\solutionset$ with Radon partition exists. A \defemph{Radon point} of a set $S$ with Radon partition $A,B$ is any $\radonPoint\in\convhull{A}\cap\convhull{B}$.
	\label{def:radonPoint}
\end{defin}
We now present the~\parallelSchemeName~(Algorithm~\ref{alg:radonParallel}), which is able to effectively parallelise consistent and efficient learning algorithms.
Input to this parallelisation scheme is a learning algorithm $\algo$ on a hypothesis space $\solutionset$, a dataset $\dataset\subseteq\inputspace\times\outputspace$, the Radon number $r\in\N$ of the hypothesis space $\solutionset$, and a parameter $h\in\N$. It divides the dataset into $r^h$ subsets $\dataset_1,\dots,\dataset_{r^h}$ (line~\ref{algo:lineDivideDataset}) and runs the algorithm $\algo$ on each subset in parallel (line~\ref{algo:lineRun}). Then, the set of hypotheses (line~\ref{algo:lineDefineSet}) is iteratively aggregated to form better sets of hypotheses (line~\ref{algo:forBegin}-\ref{algo:forEnd}). For that the set is partitioned into subsets of size $r$ (line~\ref{algo:linePartition}) and the Radon point of each subset is calculated in parallel (line~\ref{algo:lineCalcRadon}). The final step of each iteration is to replace the set of hypotheses by the set of Radon points (line~\ref{algo:lineReplaceWithRadon}). 

The scheme requires a hypothesis space with a valid notion of convexity and finite Radon number.
While other notions of convexity are possible~\cite{rubinov2013abstract,kay1971axiomatic}, in this paper we restrict our consideration to Euclidean spaces with the usual notion of convexity. Radon's theorem~\cite{radon1921mengen} states that the Euclidean space $\R^d$ has Radon number $r=d+2$. Radon points can then be obtained by solving a system of linear equations of size $r\times r$ (to be fully self-contained we state the system of linear equations explicitly in Appendix~\ref{sec:app:radonPointConstr}). 
%
The next proposition gives a guarantee on the quality of Radon points:
\begin{prop}
	Given a probability measure $\prob$ over a hypothesis space $\solutionset$ with finite Radon number $r$, let $F$ denote a random variable with distribution $\prob$. Furthermore, let $\radonPoint$ be the random variable obtained by computing the Radon point of $r$ random points drawn according to $\prob^r$. Then it holds for the expected regret $\quality$ and all $\eps\in\R$ that 
	\[
	\prob\left(\qualityOf{\radonPoint}>\eps\right)\leq\left(r\prob\left(\qualityOf{F}>\eps\right)\right)^2\enspace .
	\]
	\label{prop:probOfRadonBeingBad}
\end{prop}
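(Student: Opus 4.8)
The plan is to exploit that the regret $\quality$ is a \emph{convex, nonnegative} function on $\solutionset$. Convexity follows because $\Exp[\loss(\solution,\inputvar,\outputvar)]$ is an expectation of functions convex in $\solution$, hence convex, and subtracting the constant $\argmin_{\solution'\in\solutionset}\Exp[\loss(\solution',\inputvar,\outputvar)]$ preserves convexity; nonnegativity is immediate from the definition of that minimum. Write $F_1,\dots,F_r$ for the $r$ iid draws from $\prob$ whose Radon point is $\radonPoint$, and set $p:=\prob(\qualityOf{F}>\eps)$.

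First I would invoke the definition of the Radon number: since $\{F_1,\dots,F_r\}$ consists of $r$ points, there is a Radon partition into disjoint $A,B$ with a common point $\radonPoint\in\convhull{A}\cap\convhull{B}\subseteq\solutionset$ (the last inclusion by convexity of $\solutionset$). Writing $\radonPoint$ as a convex combination of the elements of $A$ and pushing it through the convex function $\quality$ yields $\qualityOf{\radonPoint}\leq\max_{F_i\in A}\qualityOf{F_i}$, and the same argument applied to $B$ gives $\qualityOf{\radonPoint}\leq\max_{F_j\in B}\qualityOf{F_j}$. Consequently, on the event $\{\qualityOf{\radonPoint}>\eps\}$ both $A$ and $B$ must contain at least one hypothesis with regret exceeding $\eps$; since $A\cap B=\emptyset$, this forces at least two of $F_1,\dots,F_r$ to have regret exceeding $\eps$. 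The resulting inclusion of events holds for any admissible choice of Radon partition and Radon point, so nothing beyond $\radonPoint$ being a well-defined random variable is needed here.

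It then remains to bound $\prob(\text{at least two of }F_1,\dots,F_r\text{ have regret}>\eps)$. As the $F_i$ are iid, a union bound over the $\binom{r}{2}$ index pairs controls this probability by $\binom{r}{2}p^2\leq r^2p^2=(rp)^2$, which is exactly the claimed inequality.

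I do not anticipate a genuine obstacle; the load-bearing step is simply the convexity of $\quality$, on which the whole argument rests, as it is what permits the regret to be pulled inside the convex combinations defining the Radon point. The only nuisance is the degenerate case in which the sampled $F_i$ are not all distinct, but then a Radon partition into two singletons sharing a common point exists outright, and the conclusion above still applies verbatim.
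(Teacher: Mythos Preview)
Your argument is correct and is essentially the paper's own proof: the paper likewise shows (via an auxiliary proposition) that any Radon point $\radonPoint$ of $S$ has at least two elements of $S$ with $\quality$-value at least $\qualityOf{\radonPoint}$, by using that $\radonPoint$ lies in the convex hulls of both parts of the partition, and then finishes with a union bound over the $\binom{r}{2}$ pairs. The only cosmetic difference is that the paper phrases the key step for quasi-convex $\quality$ rather than convex $\quality$, which is a slight generalisation but the same mechanism.
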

\vspace{-0.5cm}
%
The proof of Proposition~\ref{prop:probOfRadonBeingBad} is provided in Section~\ref{sec:appendix:proofOfPropProbOfRadonBeingBad}.
Note that this proof also shows the robustness of the Radon point compared to the average: if only one of $r$ points is $\eps$-bad, the Radon point is still $\eps$-good, while the average may or may not be; indeed, in a linear space with any set of $\eps$-good hypotheses and any $\eps'\geq\eps$, we can always find a single $\eps'$-bad hypothesis such that the average of all these hypotheses is $\eps'$-bad.

A direct consequence of Proposition~\ref{prop:probOfRadonBeingBad} is a bound on the probability that the output of the~\parallelSchemeName~with parameter $h$ is bad:
\begin{thm}
	Given a probability measure $\prob$ over a hypothesis space $\solutionset$ with finite Radon number $r$, let $F$ denote a random variable with distribution $\prob$. Denote by $\radonPoint_1$ the random variable obtained by computing the Radon point of $r$ random points drawn iid according to $\prob$ and by $\prob_1$ its distribution. For any $h\in\N$, let $\IRP$ denote the Radon point of $r$ random points drawn iid from $\prob_{h-1}$ and by $\prob_{h}$ its distribution.
	Then for any convex function $\quality:\solutionset\rightarrow\R$ and all $\eps\in\R$ it holds that 
	\[
	\prob\left(\quality(\IRP)>\eps\right)\leq\left(r \prob\left(\quality(F)>\eps\right)\right)^{2^h}\enspace .
	\]
	\label{thm:confBoostofIteratedRadonPoint}
\end{thm}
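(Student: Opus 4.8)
The plan is to establish the bound by induction on the height $h$, with Proposition~\ref{prop:probOfRadonBeingBad} furnishing both the base case and the engine of the inductive step; throughout, abbreviate $p := \prob(\quality(F) > \eps)$. First I would record one mild strengthening of Proposition~\ref{prop:probOfRadonBeingBad}: although it is phrased for the expected regret, its proof uses nothing about the functional beyond convexity. Indeed, a Radon point $\radonPoint$ of a set with partition $A,B$ lies in $\convhull A \cap \convhull B$, so convexity forces $\quality(\radonPoint) \le \min\{\max_{a\in A}\quality(a),\, \max_{b\in B}\quality(b)\}$; hence $\quality(\radonPoint) > \eps$ can occur only if some point of $A$ and some point of $B$ — and therefore, since $A\cap B = \emptyset$, at least two of the $r$ sampled points — are $\eps$-bad. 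Thus the Proposition holds verbatim for an arbitrary convex $\quality:\solutionset\to\R$, which is exactly the generality the theorem needs (this is the same reasoning behind the robustness remark that follows the Proposition).

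For the base case $h=1$, $\radonPoint_1$ is by definition the Radon point of $r$ points drawn iid from $\prob=\prob_0$, so the strengthened Proposition gives $\prob(\quality(\radonPoint_1)>\eps)\le (rp)^2 = (rp)^{2^1}$. For the inductive step, suppose $\prob(\quality(\radonPoint_{h-1})>\eps)\le (rp)^{2^{h-1}}$. Since $\prob_{h-1}$ — the law of $\radonPoint_{h-1}$ — is again a probability measure on the same space $\solutionset$, whose Radon number is still $r$, the Proposition applies with $\prob_{h-1}$ in the role of $\prob$ and $\radonPoint_{h-1}$ in the role of $F$; because $\IRP$ is the Radon point of $r$ iid draws from $\prob_{h-1}$, this yields $\prob(\quality(\IRP)>\eps)\le \bigl(r\,\prob(\quality(\radonPoint_{h-1})>\eps)\bigr)^2 \le \bigl(r\,(rp)^{2^{h-1}}\bigr)^2$, and squaring doubles the exponent from $2^{h-1}$ to $2^h$, closing the induction.

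The delicate point — and essentially the only one — is this last line: squaring $(rp)^{2^{h-1}}$ correctly produces $(rp)^{2^h}$, but each invocation of the Proposition also emits a bare factor of $r$ that must be reabsorbed rather than allowed to accumulate over the $h$ levels of the aggregation tree. The clean way to bookkeep is to propagate not $\prob(\quality(\radonPoint_h)>\eps)$ itself but a suitably rescaled version of it — for instance $r\,\prob(\quality(\radonPoint_h)>\eps)$, or the sharper $\binom r2\,\prob(\quality(\radonPoint_h)>\eps)$ coming from the union bound $\prob(\quality(\radonPoint)>\eps)\le\binom r2\,\prob(\quality(F)>\eps)^2$ that the proof of Proposition~\ref{prop:probOfRadonBeingBad} actually establishes — since this rescaled quantity simply squares at each level and then telescopes to the $2^h$-th power in one step. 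Everything else is a mechanical, level-by-level application of Proposition~\ref{prop:probOfRadonBeingBad}.
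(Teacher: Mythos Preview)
Your strategy---induction on $h$ via Proposition~\ref{prop:probOfRadonBeingBad}---is exactly the paper's (``repeated application of the proposition''), and you are right that the proof of the Proposition uses only (quasi-)convexity of $\quality$, so it applies in the generality the theorem needs. You also correctly flag a bookkeeping issue the paper glosses over: each application of the Proposition emits a multiplicative factor that squaring alone does not absorb.

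However, your proposed fix does not close that gap. Write $p_j=\prob(\quality(\radonPoint_j)>\eps)$ and $p=p_0$. From $p_j\le (rp_{j-1})^2$ one gets $rp_j\le r\,(rp_{j-1})^2$, which is $r$ times too large---so $rp_j$ does \emph{not} ``simply square at each level'' as you claim. The $\binom{r}{2}$-rescaling \emph{does} square cleanly (via the sharper $p_j\le\binom{r}{2} p_{j-1}^2$ that the union bound over pairs actually gives), but it telescopes only to $p_h\le\binom{r}{2}^{-1}\bigl(\binom{r}{2}\, p\bigr)^{2^h}$; since $\binom{r}{2}=r(r-1)/2>r$ for every $r\ge4$, this is strictly weaker than $(rp)^{2^h}$ once $h$ is large. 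Concretely, for $r=4$ and $h=3$ the recursion gives at best $6^{7}p^{8}$, whereas the theorem claims $4^{8}p^{8}$, and $6^{7}=279936>65536=4^{8}$. So you have located a real gap---one shared by the paper's own one-line proof---but neither rescaling reaches the stated constant; what iterating Proposition~\ref{prop:probOfRadonBeingBad} actually delivers is a bound with base $\binom{r}{2}\,p$ (equivalently $r^2p$) in place of $rp$, which preserves the doubly-exponential shape the downstream runtime analysis needs but not the exact constant written in the theorem.
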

\vspace{-0.5cm}
The proof of Theorem~\ref{thm:confBoostofIteratedRadonPoint} is also provided in Section~\ref{sec:appendix:proofOfPropProbOfRadonBeingBad}.
%
For the~\parallelSchemeName~with parameter $h$, Theorem~\ref{thm:confBoostofIteratedRadonPoint} shows that the probability of obtaining an $\eps$-bad hypothesis is doubly exponentially reduced: with a bound $\confBase$ on this probability for the base learning algorithm, the bound $\conf$ on this probability for the~\parallelSchemeName~is
\begin{equation}
\begin{split}
\conf = \left(r\confBase\right)^{2^h}\enspace .
\end{split}
\label{eq:confidenceFromConfBase}
\end{equation}
In the next section we 
compare the~\parallelSchemeName~to its base learning algorithm which is applied to a dataset of the size necessary to achieve the same $\eps$ and $\conf$.


\section{Sample and Runtime Complexity}
\label{se:rt}
In this section we first derive the sample and runtime complexity of the~\parallelSchemeName~$\radonAlgo$ from the sample and runtime complexity of the base learning algorithm $\algo$. We then relate the runtime complexity of the~\parallelSchemeName~to an application of the base learning algorithm which achieves the same $(\eps,\conf)$-guarantee.
%
%
For that, we consider consistent and efficient base learning algorithms with a sample complexity of the form $\samplesize^\algo_0(\eps,\confBase) = \left(\alpha_{\eps}+\beta_{\eps}\ld\sfrac{1}{\confBase}\right)^{k}$, 
for some\footnote{We derive $\alpha_{\eps},\beta_{\eps}$ for hypothesis spaces with finite VC~\cite{vapnik1971uniform} and Rademacher~\cite{bartlett2003rademacher} complexity in App.~\ref{sec:appendix:consistency}.}
$\alpha_{\eps},\beta_{\eps}\in\R$, and $k\in\N$. 
From now on, we also assume that $\confBase \leq \sfrac{1}{2r}$ for the base learning algorithm.

%

The~\parallelSchemeName~creates $r^h$ base hypotheses and, with $\conf$ as in Equation~\ref{eq:confidenceFromConfBase}, has sample complexity 
\begin{equation}
\samplesize_0^\radonAlgo(\eps,\conf)\ =\ r^h\samplesize^\algo_0(\eps,\confBase)\ =\ r^h\cdot\left(\alpha_{\eps}\ +\ \beta_{\eps}\ld\frac{1}{\confBase}\right)^{k}\enspace .
\end{equation}
Theorem~\ref{thm:confBoostofIteratedRadonPoint} then implies that the~\parallelSchemeName~with base learning algorithm $\algo$ is consistent: with $\samplesize \geq \samplesize_0^\radonAlgo(\eps,\conf)$ samples it achieves an $(\eps,\conf)$-guarantee. 

To achieve the same guarantee as the~\parallelSchemeName, the application of the base learning algorithm $\algo$ itself (sequentially) 
would require $M\geq\samplesize^\algo_0(\eps,\conf)$ samples, where
\begin{equation}
\samplesize^\algo_0(\eps,\conf)\ =\ \samplesize^\algo_0\left(\eps,(r\confBase)^{2^h}\right)\ =\ \left(\alpha_{\eps}\ \ +\ \ 2^h\cdot\beta_{\eps}\ld\frac{1}{r\confBase}\right)^{k}\enspace .
\label{eq:sampleSizeofSequential}
\end{equation}

For base learning algorithms $\algo$ with runtime $\runtime_\algo(\basesamplesize)$ polynomial in the data size $\basesamplesize\in\N$, i.e., ${\runtime_\algo(\basesamplesize)\in\bigo\left(\basesamplesize^\kappa\right)}$ with $\kappa\in\N$, we now determine the runtime $\runtime_{\radonAlgo,h}(\samplesize)$ of the~\parallelSchemeName~with $h$ iterations and $c=r^h$ processing units on $\samplesize\in\N$ samples. In this case all base learning algorithms can be executed in parallel. In practical applications fewer physical processors can be used to simulate $r^h$ processing units---we discuss this case in Section~\ref{sec:discussion}.

The runtime of the~\parallelSchemeName~can be decomposed into the runtime of the base learning algorithm and the runtime for the aggregation. 
The base learning algorithm requires $\basesamplesize\geq\samplesize_0^\algo(\eps,\confBase)$ samples and can be executed on $r^h$ processors in parallel in time $\runtime_\algo(\basesamplesize)$.
The Radon point in each of the $h$ iterations can then be calculated in parallel in time $r^3$~(see Appendix~\ref{sec:app:radonPointConstr}).
Thus, the runtime of the~\parallelSchemeName~with $\samplesize=r^h\basesamplesize$ samples is 
\begin{equation}
\runtime_{\radonAlgo,h}(\samplesize)=\runtime_\algo\left(\basesamplesize\right)+hr^3\enspace .
\label{eg:runtimeRadonAlgo}
\end{equation}
In contrast, the runtime of the base learning algorithm for achieving the same guarantee is $\runtime_{\algo}(M)$ with $M\geq\samplesize^\algo_0(\eps,\conf)$. Ignoring logarithmic and constant terms, $\samplesize_0^{\algo}(\eps,\conf)$ behaves as $2^{h}\samplesize_0^{\algo}(\eps,\confBase)$.
%
To obtain polylogarithmic runtime of $\radonAlgo$ compared to $\runtime_{\algo}(M)$, we choose the parameter ${h\approx\ld M - \ld\ld M}$ such that $n\approx \sfrac{M}{2^h}=\ld M$. Thus, the runtime of the~\parallelSchemeName~is in $\bigo\left(\ld^\kappa M + r^3\ld M \right)$. 
%
This result is formally summarised in Theorem~\ref{prop:polylogRuntime}.
\begin{thm}
	The~\parallelSchemeName~with a consistent and efficient regularised risk minimisation algorithm on a hypothesis space with finite Radon number has polylogarithmic runtime on quasi-polynomially many processing units if the Radon number can be upper bounded by a function polylogarithmic in the sample complexity of the efficient regularised risk minimisation algorithm.
	\label{prop:polylogRuntime}
\end{thm}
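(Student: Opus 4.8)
The plan is to assemble the runtime and sample‑complexity estimates already derived in this section and to commit to a concrete choice of the height parameter $h$; the polylogarithmic bound on the Radon number $r$ then enters only at the very end. First I would fix a target accuracy $\eps$ and target confidence $\conf$, and let $M=\samplesize^\algo_0(\eps,\conf)$ be the number of samples the base algorithm $\algo$ needs to meet this $(\eps,\conf)$-guarantee on its own. Since $\algo$ is efficient, $M$ is polynomial in $\sfrac{1}{\eps}$ and $\log\sfrac{1}{\conf}$, so ``polylogarithmic (resp.\ quasi-polynomial) in $M$'' and ``polylogarithmic (resp.\ quasi-polynomial) in $\sfrac{1}{\eps}$ and $\log\sfrac{1}{\conf}$'' are interchangeable; it therefore suffices to bound the~\parallelSchemeName's runtime and processor count in terms of $M$.

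Next I would recall the three ingredients established above: (i) with $\samplesize=r^h\basesamplesize$ samples and $c=r^h$ processing units, Equation~\eqref{eg:runtimeRadonAlgo} gives $\runtime_{\radonAlgo,h}(\samplesize)=\runtime_\algo(\basesamplesize)+hr^3$; (ii) $\runtime_\algo(\basesamplesize)\in\bigo(\basesamplesize^\kappa)$; and (iii) choosing the base confidence $\confBase\le\sfrac{1}{2r}$ so that $(r\confBase)^{2^h}\le\conf$, Equation~\eqref{eq:sampleSizeofSequential} relates the per-processor sample size $\basesamplesize=\samplesize^\algo_0(\eps,\confBase)$ to $M$ by $M\approx 2^{h}\basesamplesize$ up to constant and logarithmic factors (once the $\ld\sfrac{1}{\confBase}$ term dominates $\alpha_\eps$). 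I would then set $h$ of order $\ld M$ — concretely $h\approx\ld M-\ld\ld M$, equivalently $h$ near the top of the admissible range where $\confBase$ may be taken as large as $\sfrac{1}{2r}$ — so that $\basesamplesize\approx\sfrac{M}{2^h}$ becomes polylogarithmic in $M$ (essentially constant at the extreme of the range). By Theorem~\ref{thm:confBoostofIteratedRadonPoint} the resulting~\parallelSchemeName~still attains the $(\eps,\conf)$-guarantee, and substituting into (i)–(ii) its runtime is $\bigo(\basesamplesize^\kappa+hr^3)=\bigo(\polylog M+r^3\,\ld M)$.

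Finally I would invoke the hypothesis $r\le\polylog M$: then $r^3\,\ld M\le\polylog M$, so the runtime is $\bigo(\polylog M)$, i.e.\ polylogarithmic, while the processor count is $c=r^h\le(\polylog M)^{\bigo(\ld M)}=2^{\bigo(\ld M\cdot\ld\ld M)}=2^{\polylog M}$, i.e.\ quasi-polynomial; translating back through $M=\poly(\sfrac{1}{\eps},\log\sfrac{1}{\conf})$ then gives both claims in the stated parameters.

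The step I expect to be the main obstacle is pinning down the admissible range of $h$: I must check that there actually exists an $h$ of order $\ld M$ for which some $\confBase\le\sfrac{1}{2r}$ satisfies $(r\confBase)^{2^h}\le\conf$ — the constraint $\confBase\le\sfrac{1}{2r}$ forces $2^h\le\ld\sfrac{1}{\conf}$ — and that this upper limit is itself $\Theta(\ld M)$, which holds precisely because the $\ld\sfrac{1}{\confBase}$ term dominates $\alpha_\eps$ in $\samplesize^\algo_0$, giving $\ld M=\Theta(\ld\ld\sfrac{1}{\conf})$. Verifying that rounding $h$ to an integer and carrying the fixed exponents $k$ and $\kappa$ through the estimates do not spoil the polylogarithmic/quasi-polynomial conclusions is routine but should be spelled out.
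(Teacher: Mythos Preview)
Your proposal is correct and follows essentially the same route as the paper: fix $M=\samplesize^\algo_0(\eps,\conf)$, choose $h$ of order $\ld M-\ld\ld M$ so that the per-processor sample size $\basesamplesize$ becomes polylogarithmic in $M$, plug into $\runtime_{\radonAlgo,h}=\runtime_\algo(\basesamplesize)+hr^3$, and then use the polylog bound on $r$ to conclude both the runtime bound and the quasi-polynomial processor count $r^h\in\bigo(M^{\ld r})$. The paper makes the exponent $k$ explicit by taking $h=\lceil\tfrac{1}{k}(\ld M-\ld\ld M)\rceil$ (since the relation is $M\approx 2^{hk}\basesamplesize$ rather than $M\approx 2^h\basesamplesize$), and it handles your ``admissible range'' concern separately after the proof by showing that $\confBase\le\sfrac{1}{2r}$ holds once $M\ge 2^{k\beta_\eps(\alpha_\eps+1)}$; both are exactly the routine verifications you flagged.
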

The theorem is proven in Appendix~\ref{sec:appendix:proofThmPolylogRuntime} and relates to Nick's Class~\citep{arora2009computational}:
A decision problem can be solved efficiently in parallel in the sense of Nick's Class, if it can be decided by an algorithm in polylogarithmic time on polynomially many processors (assuming, e.g., PRAM model). 
For the class of decision problems that are the hardest in $P$, i.e., for $P$-complete problems, it is believed that there is no efficient parallel algorithm for solving them in this sense.
Theorem~\ref{prop:polylogRuntime} provides a step towards finding efficient parallelisations of regularised risk minimisers and towards answering the open question: is consistent regularised risk minimisation possible in polylogarithmic time on polynomially many processors. 
A similar question, for the case of learning half spaces, has been called a fundamental open problem by~\citet{long_algorithms_2013} who gave an algorithms which runs on polynomially many processors in time that depends polylogarithmically on the sample size but is inversely proportional to a parameter of the learning problem. 
While Nick's Class as a notion of efficiency has been criticised~\citep{kruskal1990complexity}, it is the only notion of efficiency that forms a proper complexity class in the sense of~\citet{blum1967machine}. To overcome the weakness of using only this notion, \citet{kruskal1990complexity} suggested to consider also the inefficiency of simulating the parallel algorithm on a single processing unit. We discuss the inefficiency and the speed-up in Appendix~\ref{sec:appendix:proofPropRuntime}. 

\section{Empirical Evaluation}
\label{sec:experiments}
This empirical study compares the~\parallelSchemeName~to state-of-the-art parallel machine learning algorithms from the Spark machine learning library~\cite{meng2016mllib}, 
as well as the natural baseline of averaging hypotheses instead of calculating their Radon point (averaging-at-the-end, \defemph{Avg}). 
We use base learning algorithms from WEKA~\citep{witten2016data} and scikit-learn~\cite{scikit-learn}. We compare the~\parallelSchemeName~to the base learning algorithms on moderately sized datasets, due to scalability limitations of the base learners, and reserve larger datasets for the comparison with parallel learners. The experiments are executed on a Spark cluster ($5$ worker nodes, $25$ processors per node)\footnote{The source code implementation in Spark can be found in the bitbucket repository\hspace{\stretch{1}}\\\url{https://bitbucket.org/Michael_Kamp/radonmachine}.}. All results are obtained using $10$-fold cross validation. We apply the~\parallelSchemeName~with parameter $h=1$ and the maximal parameter $h$ such that each instance of the base learning algorithm is executed on a subset of size at least $100$ (denoted $h=max$). Averaging-at-the-end executes the base learning algorithm on the same number of subsets $r^h$ as the~\parallelSchemeName~with that parameter and is denoted in the Figures by stating the parameter $h$ as for the~\parallelSchemeName. All other parameters of the learning algorithms are optimised on an independent split of the datasets. See Appendix~\ref{appendix:addExps} for additional details.
\begin{figure*}[b!]
	\centering
	\subfigure[]{\label{fig:expBoxPlotsRuntimeAUCCentral}\includegraphics[height=10.5cm]{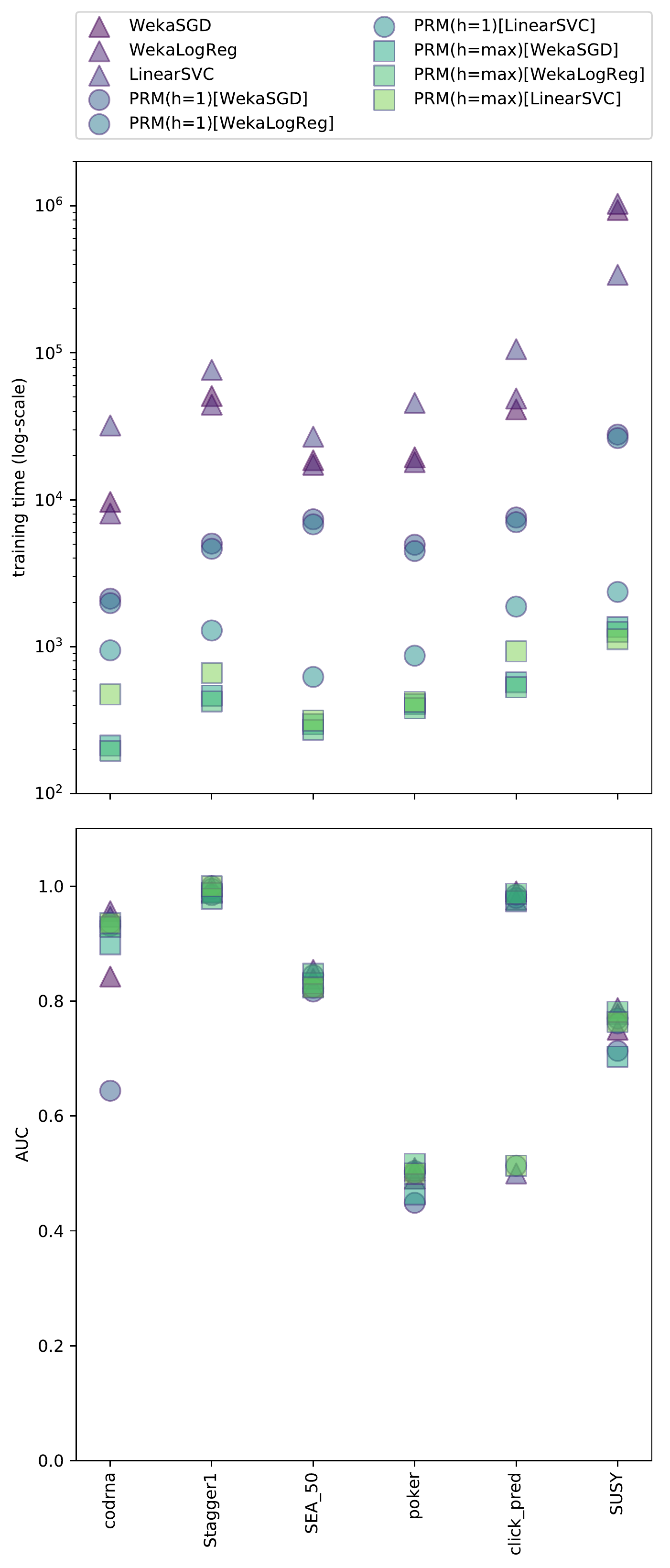}}\hfill%
	\subfigure[]{\label{fig:radonVsAverage}\includegraphics[height=10.5cm]{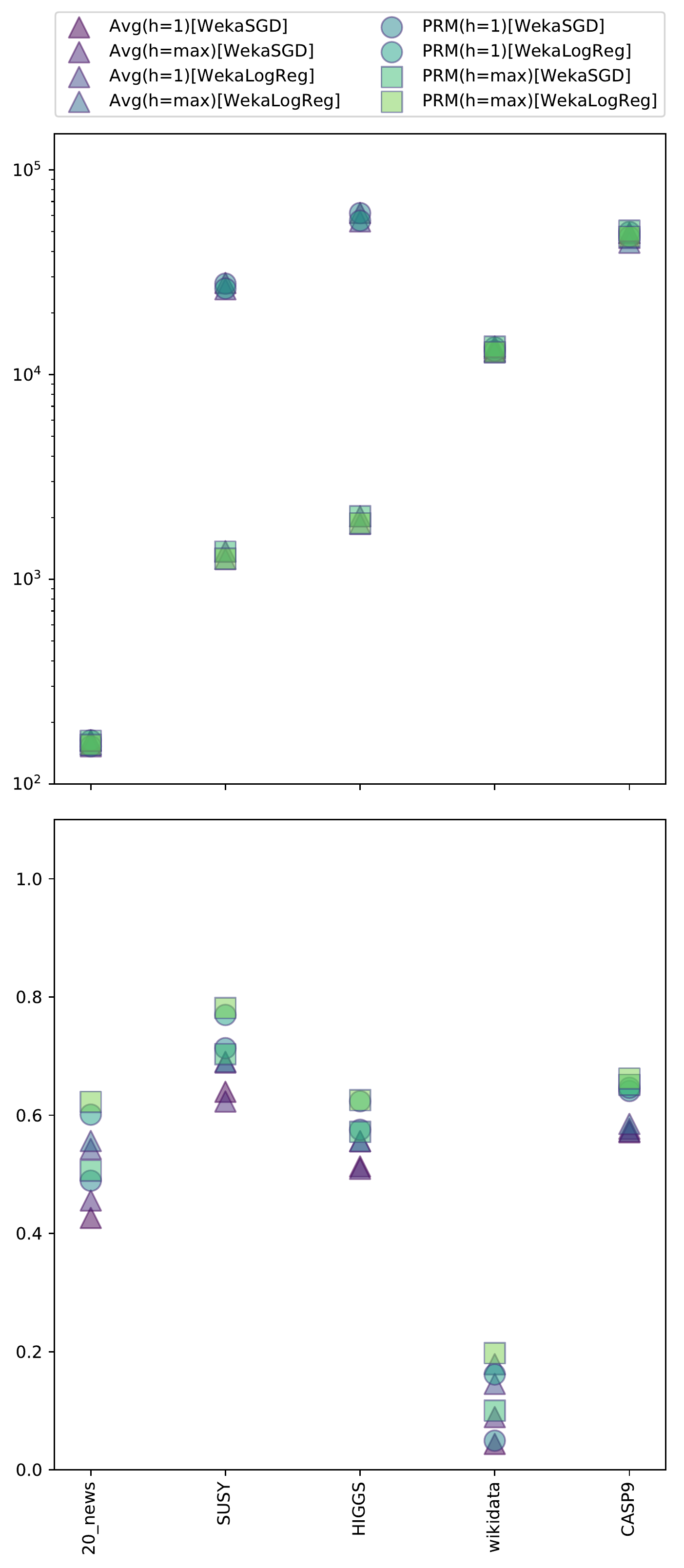}}\hfill%
	\subfigure[]{\label{fig:expBoxPlotsRuntimeAUCSpark}\includegraphics[height=10.5cm]{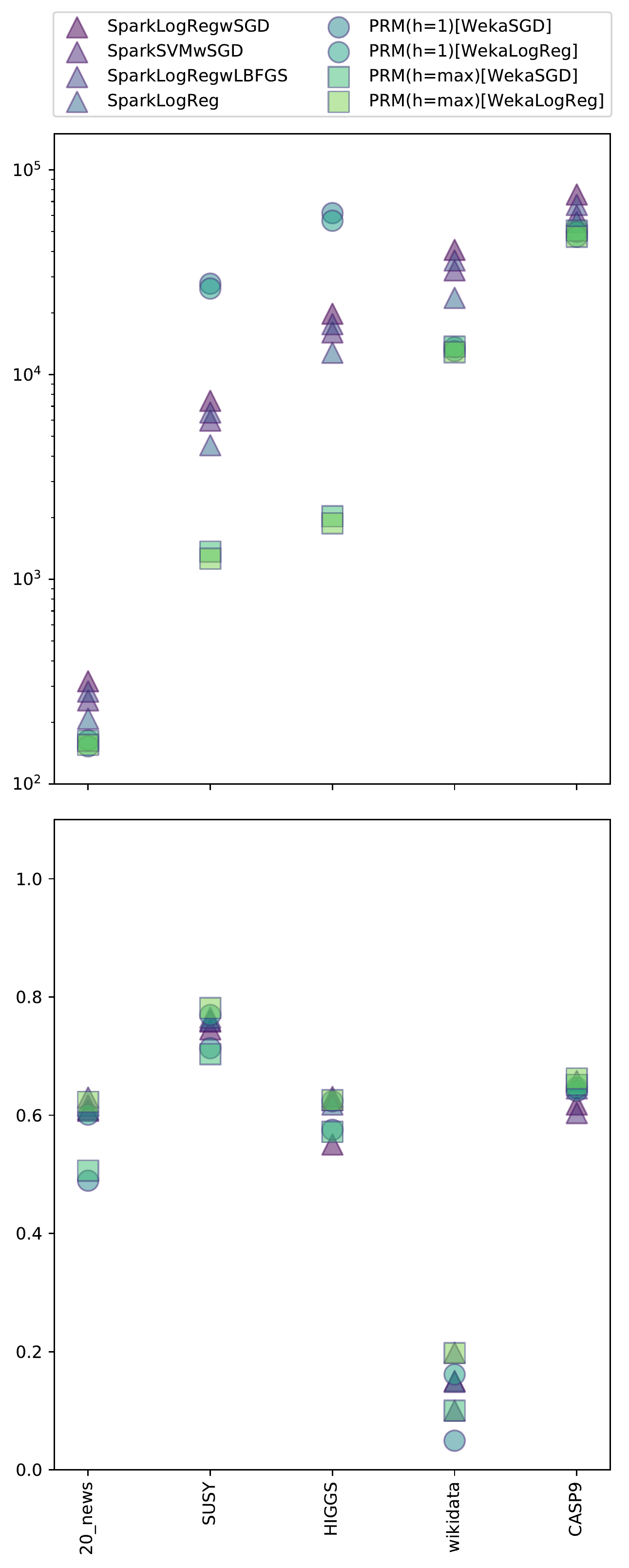}}\hfill%
	\caption{
		\subref{fig:expBoxPlotsRuntimeAUCCentral} Runtime (log-scale) and AUC of base learners and their parallelisation using the~\parallelSchemeName~(PRM) for $6$ datasets with $\samplesize\in [488\,565, 5\,000\,000]$, $d\in [3, 18]$. Each point represents the average runtime (upper part) and AUC (lower part) over $10$ folds of a learner---or its parallelisation---on one datasets.~		
		\subref{fig:radonVsAverage} Runtime and AUC of the~\parallelSchemeName~compared to the averaging-at-the-end baseline (Avg) on $5$ datasets with $\samplesize\in [5\,000\,000, 32\,000\,000]$, $d\in [18, 2\,331]$.~
		\subref{fig:expBoxPlotsRuntimeAUCSpark} Runtime and AUC of several Spark machine learning library algorithms and the~\parallelSchemeName~using base learners that are comparable to the Spark algorithms on the same datasets as in Figure~\ref{fig:radonVsAverage}.
	}
\end{figure*}
\begin{figure}
	\centering
	\begin{minipage}{.47\textwidth}
		\centering
		\includegraphics[height=4.5cm]{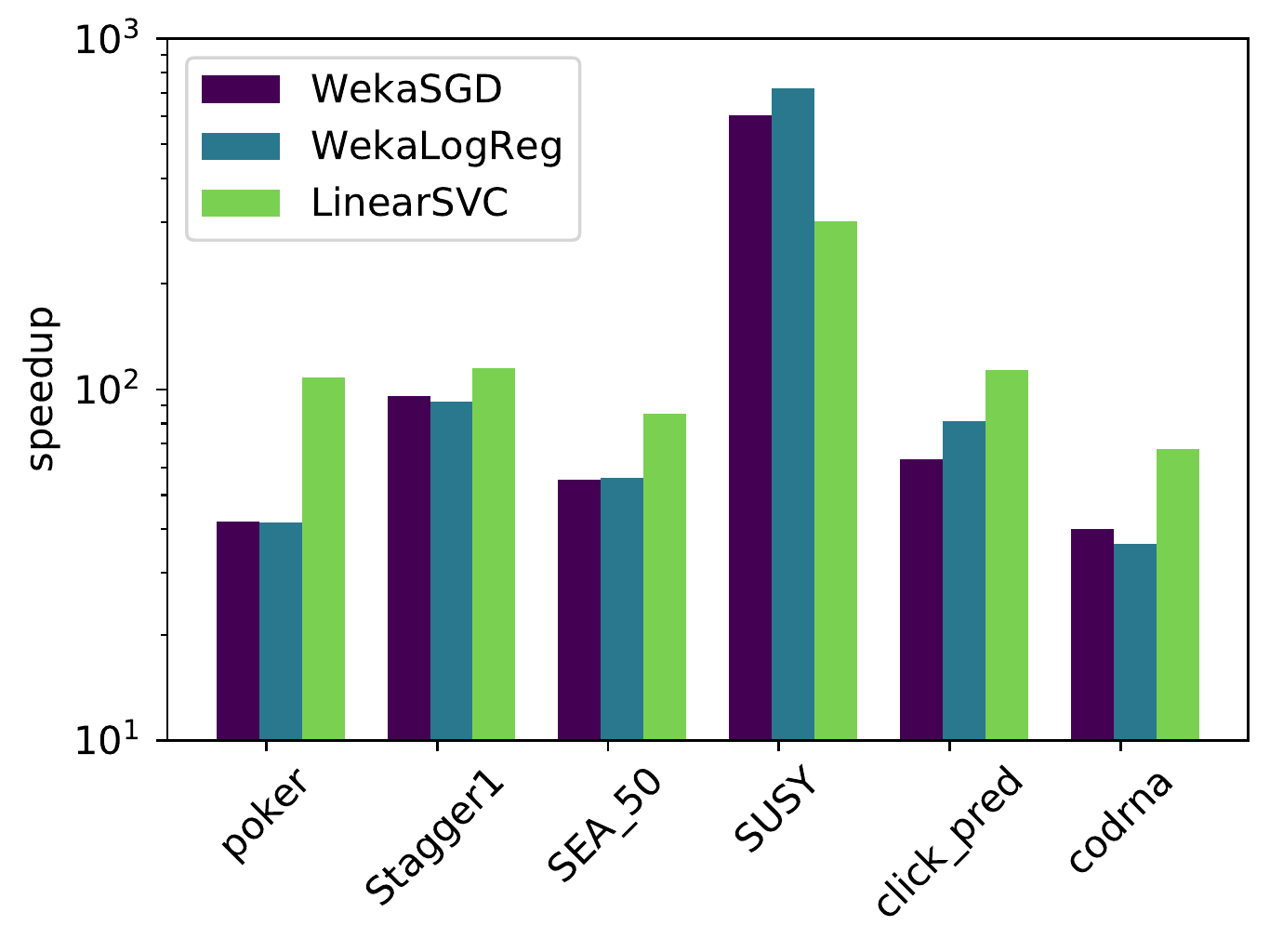}
		\captionof{figure}{Speed-up (log-scale) of the~\parallelSchemeName~over its base learners per dataset from the same experiment as in Figure~\ref{fig:expBoxPlotsRuntimeAUCCentral}.}
		\label{fig:speedupPerDataset}
	\end{minipage}%
	\hspace{0.5cm}
	\begin{minipage}{.47\textwidth}
		\centering
		\includegraphics[height=4.5cm]{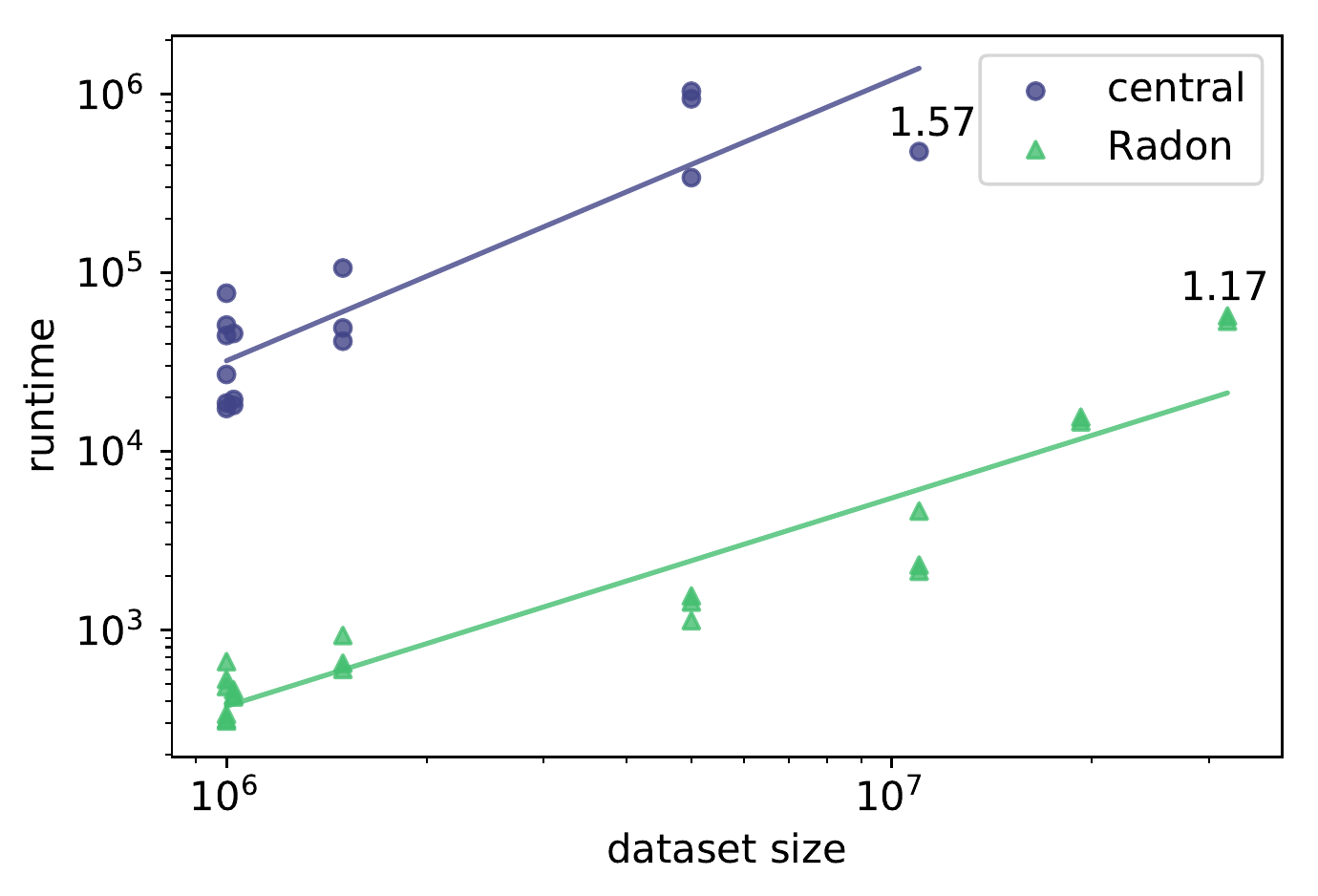}
		\captionof{figure}{Dependence of the runtime on the dataset size for of the~\parallelSchemeName~compared to its base learners.}
		\label{fig:expRuntimeVsDatasize}
	\end{minipage}
\end{figure}
%

\textbf{What is the speed-up of our scheme in practice?}
In Figure~\ref{fig:expBoxPlotsRuntimeAUCCentral}, we compare the~\parallelSchemeName~to its base learners on moderately sized datasets (details on the datasets are provided in Appendix~\ref{appendix:addExps}). 
\begin{wrapfigure}{r}{0.5\textwidth}
	\includegraphics[width=7.1cm]{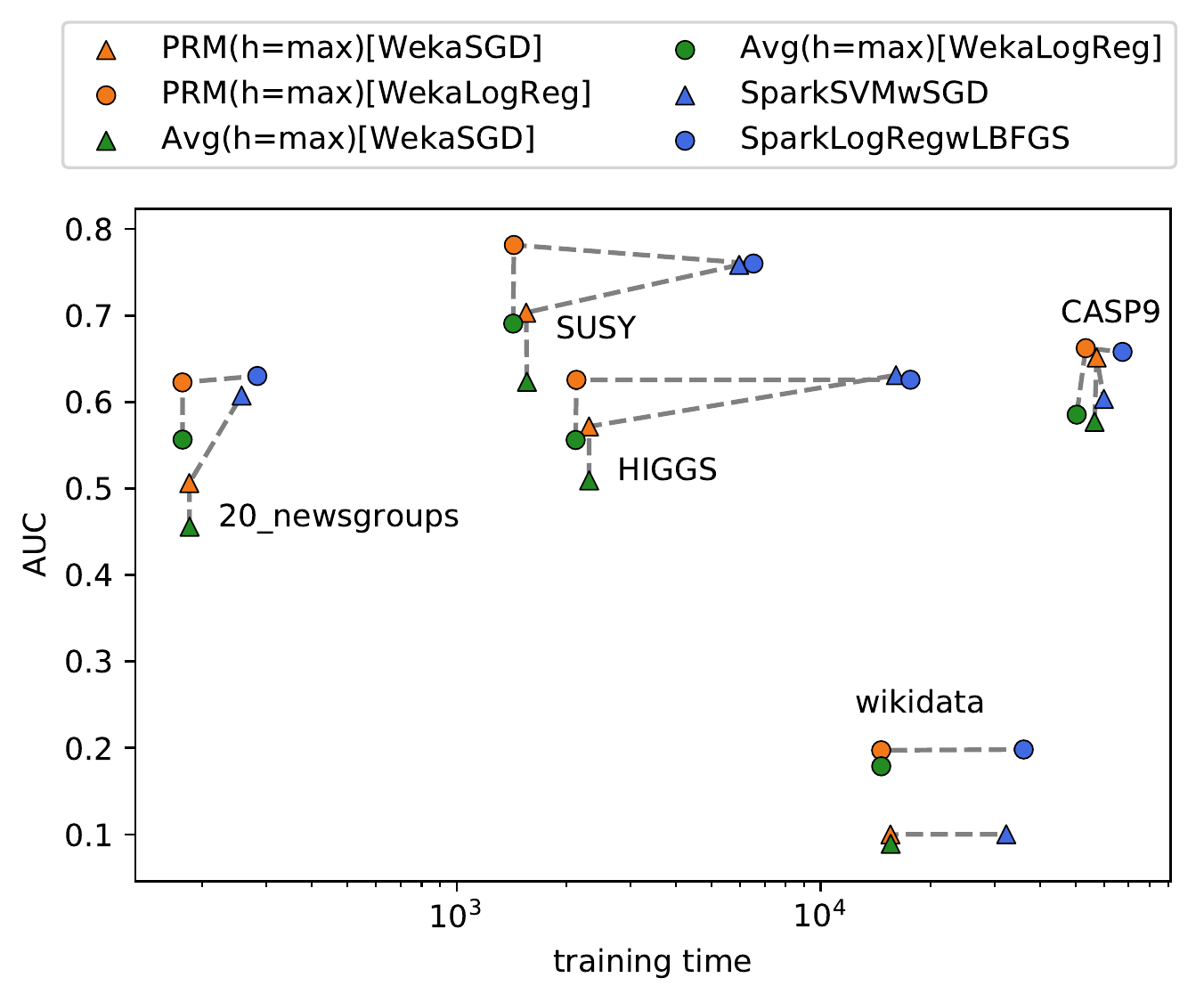}
	\captionof{figure}{Representation of the results in Figure~\ref{fig:radonVsAverage} and~\ref{fig:expBoxPlotsRuntimeAUCSpark} in terms of the trade-off between runtime and AUC for the~\parallelSchemeName~(PRM) and averaging-at-the-end (Avg), both with parameter $h=max$, and parallel machine learning algorithms in Spark. The dashed lines connect the~\parallelSchemeName~to averaging-at-the-end with the same base learning algorithm and a comparable Spark machine learning algorithm.}
	\label{fig:expAucVsRuntime}
\end{wrapfigure}
There, the~\parallelSchemeName~is between $80$ and around $700$-times faster than the base learner using $150$ processors. The speed-up is detailed in Figure~\ref{fig:speedupPerDataset}. On the SUSY dataset
(with $5\,000\,000$ instances and $18$ features), the~\parallelSchemeName~on $150$ processors with ${h=3}$ is $721$ times faster than its base learning algorithms. 
At the same time, their predictive performances, measured by the area under the ROC curve (AUC) on an independent test dataset, are comparable.

\textbf{How does the scheme compare to averaging-at-the-end?} In Figure~\ref{fig:radonVsAverage} we compare the runtime and AUC of the parallelisation scheme against the averaging-at-the-end baseline (\emph{Avg}). In terms of the AUC, the~\parallelSchemeName~outperforms the averaging-at-the-end baseline on all datasets by at least $10\%$. The runtimes can hardly be distinguished in that figure. A small difference can however be noted in Figure~\ref{fig:expAucVsRuntime} which is discussed in more details in the next paragraph. Since averaging is less computationally expensive than calculating the Radon point, the runtimes of the averaging-at-the-end baselines are slightly lower than the ones of the~\parallelSchemeName. However, compared to the computational complexity of executing the base learner, this advantage becomes negligible. 

%
%
\textbf{How does our scheme compare to state-of-the-art Spark machine learning algorithms?} 
We compare the~\parallelSchemeName~to various Spark machine learning algorithms on $5$ large datasets. The results in Figure~\ref{fig:expBoxPlotsRuntimeAUCSpark} indicate that the proposed parallelisation scheme with $h=max$ has a substantially smaller runtime than the Spark algorithms on all datasets. 
On the SUSY and HIGGS dataset, the~\parallelSchemeName~is one order of magnitude faster than the Spark implementations---here the comparatively small number of features allows for a high level of parallelism. On the CASP9 dataset, the~\parallelSchemeName~is $15$\% faster than the fastest Spark algorithm. The performance in terms of AUC of the~\parallelSchemeName~is similar to the Spark algorithms. In particular, when using WekaLogReg with $h=max$, the~\parallelSchemeName~outperforms the Spark algorithms in terms of AUC and runtime on the datasets SUSY, wikidata, and CASP9. Details are given in the Appendix~\ref{appendix:addExps}. A summarizing comparison of the parallel approaches in terms of their trade-off between runtime and predictive performance is depicted in Figure~\ref{fig:expAucVsRuntime}. Here, results are shown for the~\parallelSchemeName~and averaging-at-the-end with parameter $h=max$ and for the two Spark algorithms most similar to the base learning algorithms. Note that it is unclear what caused the consistently weak performance of all algorithms on wikidata. Nonetheless, the results show that on all datasets the~\parallelSchemeName~has comparable predictive performance to the Spark algorithms and substantially higher predictive performance than averaging-at-the-end. At the same time, the~\parallelSchemeName~has a runtime comparable to averaging-at-the-end on all datasets and both are substantially faster than the Spark algorithms.

\textbf{How does the runtime depend on the dataset size in a real-world system?} 
The runtime of the~\parallelSchemeName~can be distinguished into its learning phase and its aggregation phase. While the learning phase fully benefits from parallelisation, this comes at the cost of additional runtime for the aggregation phase. The time for aggregating the hypotheses does not depend on the number of instances in the dataset but for a fixed parameter $h$ it only depends on the dimension of the hypothesis space and that parameter.
In Figure~\ref{fig:expRuntimeVsDatasize} we compare the runtimes of all base learning algorithms per dataset size to the~\parallelSchemeName s. 
Results indicate that, while the runtimes of the base learning algorithms depends on the dataset size with an average exponent of $1.57$, the runtime of the~\parallelSchemeName~depends on the dataset size with an exponent of only $1.17$. 

\textbf{How generally applicable is the scheme?} As an indication of the general applicability in practice, we also consider regression and multi-class classification. For regression, we apply the scheme to the Scikit-learn implementation of regularised least squares regression~\cite{scikit-learn}. On the dataset \emph{YearPredictionMSD}, regularised least squares regression achieves an RMSE of $12.57$, whereas the~\parallelSchemeName~achieved an RMSE of $13.64$. At the same time, the~\parallelSchemeName~is $197$-times faster. We also compare the~\parallelSchemeName~on a multi-class prediction problem using conditional maximum entropy models. For multi-class classification, we use the implementation described in~\citet{mcdonald2009efficient}, who propose to use averaging-at-the-end for distributed training. We compare the~\parallelSchemeName~to averaging-at-the-end with conditional maximum entropy models on two large multi-class datasets (\emph{drift} and \emph{spoken-arabic-digit}). On average, our scheme performs 
better with only 
slightly longer runtime. The minimal difference in runtime can be explained---similar to the results in Figure~\ref{fig:radonVsAverage}---by the smaller complexity of calculating the average instead of the Radon point.

\section{Discussion and Future Work}
\label{sec:discussion}
%
%
%
%
%
In the experiments we considered datasets where the number of dimensions is much smaller than the number of instances. \textbf{What about high-dimensional models?} 
The basic version of the parallelisation scheme presented in this paper cannot directly be applied to cases in which the size of the dataset is not at least a multiple of the Radon number of the hypothesis space. For various types of data such as text, this might cause concerns. However, random projections~\cite{johnson1984extensions} or low-rank approximations~\cite{balcan2016kernelpca,oglic2017nystrom} can alleviate this problem and are already frequently employed in machine learning. An alternative might be to combine our parallelisation scheme with block coordinate descent~\cite{sra2012optimization}. In this case, the scheme can be applied iteratively to subsets of the features.

In the experiments we considered only linear models.
\textbf{What about non-linear models?} 
Learning non-linear models causes similar problems to learning high-dimensional ones. In non-parametric methods like kernel methods, for instance, the dimensionality of the optimisation problem is equal to the number of instances, thus prohibiting the application of our parallelisation scheme. However, similar low-rank approximation techniques as described above can be applied with non-linear kernels~\cite{fine2002efficient}. Furthermore, methods for speeding up the learning process for non-linear models explicitly approximate an embedding in which then a linear model can be learned~\cite{rahimi2007random}. Using explicitly constructed feature spaces, \parallelSchemeName s can directly be applied to non-linear models.

We have theoretically analysed our parallelisation scheme for the case that there are enough processing units available to find each weak hypothesis on a separate processing units. 
\textbf{What if there are not $r^h$, but only $c<r^h$ processing units?}
The parallelisation scheme can quite naturally be ``de-parallelised'' and partially executed in sequence. For the runtime this implies an additional factor of $\max\{1,\sfrac{r^h}{\cores}\}$. 
Thus, the~\parallelSchemeName~can be applied with any number of processing units.

The scheme improves $\conf$ doubly exponentially in its parameter $h$ but for that it requires the weak hypotheses to already achieve $\confBase\leq\sfrac{1}{2r}$. \textbf{Is the scheme only applicable in high-confidence domains?} Many application scenarios require high-confidence error bounds, e.g., in the medical domain~\citep{nouretdinov2011machine} or in intrusion detection~\citep{sommer2010outside}. 
In practice our scheme achieves similar predictive quality much faster than its base learner.

Besides runtime, communication plays an essential role in parallel learning. \textbf{What is the communication complexity of the scheme?} As for all aggregation at the end strategies, the overall amount of communication is low compared to periodically communicating schemes. For the parallel aggregation of hypotheses, the scheme requires $\bigo(r^{h+1})$ messages (which can be sent in parallel) each containing a single hypothesis of size $\bigo(r)$. 
Our scheme is ideally suited for inherently distributed data and might even mitigate privacy concerns.

In a lot of applications data is available in the form of potentially infinite data streams. \textbf{Can the scheme be applied to distributed data streams?} For each data stream, a hypotheses could be maintained using an online learning algorithm and periodically aggregated using the~\parallelSchemeName, similar to the federated learning approach proposed by~\citet{mcmahan2017communication}. 

In this paper, we investigated the parallelisation of machine learning algorithms.
\textbf{Is the~\parallelSchemeName~more generally applicable?} The parallelisation scheme could be applied to more general randomized convex optimization algorithms with unknown and random target functions. We will investigate its applicability for learning in non-Euclidean, abstract convexity spaces.

\section{Conclusion and Related Work}
\label{sec:conclusion}
In this paper we provided a step towards answering an open problem: \emph{Is parallel machine learning possible in polylogarithmic time using a polynomial number of processors only?} This question has been posed for half-spaces by~\citet{long_algorithms_2013} and called ``a fundamental open problem about the abilities and limitations of efficient parallel learning algorithms''. It relates machine learning to Nick's Class of parallelisable decision problems and its variants \cite{greenlaw_limits_1995}.
Early theoretical treatments of parallel learning with respect to NC considered \emph{probably approximately correct} (PAC) \cite{valiant_theory_1984,blumer1989learnability} concept learning. Vitter and Lin \cite{vitter1992learning} introduced the notion of \emph{NC-learnable} for concept classes for which there is an algorithm that outputs a probably approximately correct hypothesis in polylogarithmic time using a polynomial number of processors. In this setting, they proved positive and negative learnability results for a number of concept classes that were previously known to be PAC-learnable in polynomial time. 
%
More recently, the special case of learning half spaces in parallel was considered by Long and Servedio \cite{long_algorithms_2013} who gave an algorithm for this case that runs on polynomially many processors in time that depends polylogarithmically on the size of the instances but is inversely proportional to a parameter of the learning problem. 
Our paper complements these theoretical treatments of parallel machine learning and provides a provably effective parallelisation scheme for a broad class of regularised risk minimisation algorithms.

Some parallelisation schemes also train learning algorithms on small chunks of data and average the found hypotheses. While this approach has advantages \cite{freund_why_2001,rosenblatt2016optimality}, current error bounds do not allow a derivation of polylogarithmic runtime~\cite{zhang_communication-efficient_2013,lin2017distributed, shamir2014communication} and it has been doubted to have any benefit over learning on a single chunk \cite{shamir2014distributed}. 
Another popular class of parallel learning algorithms is based on stochastic gradient descent, targeting expected risk minimisation directly \citep[and references therein]{shamir2014distributed}. The best so far known algorithm in this class \citep{shamir2014distributed} is the distributed mini-batch algorithm \citep{dekel2012optimal}. This algorithm still runs for a number of rounds inversely proportional to the desired optimisation error, hence not in polylogarithmic time. 
A more traditional approach is to minimise the \emph{empirical risk}, i.e., an empirical sample-based approximation of the expected risk, using any, deterministic or randomised, optimisation algorithm. This approach relies on generalisation guarantees relating the expected and empirical risk minimisation as well as a guarantee on the optimisation error introduced by the optimisation algorithm. The approach is readily parallelisable by employing available parallel optimisation algorithms \cite[e.g.,][]{boyd2011distributed}. It is worth noting that these algorithms solve a harder than necessary optimisation problem and often come with prohibitively high communication cost in distributed settings \cite{shamir2014distributed}. Recent results improve over these \cite{ma_distributed_2017} but cannot achieve polylogarithmic time as the number of iterations depends linearly on the number of processors. 

Apart from its theoretical advantages, the~\parallelSchemeName~also has several practical benefits. In particular, it is a black-box parallelisation scheme in the sense that it is applicable to a wide range of machine learning algorithms and it does not depend on the implementation of these algorithms. It speeds up learning while achieving a similar hypothesis quality as the base learner. Our empirical evaluation indicates that in practice the~\parallelSchemeName~achieves either a substantial speed-up or a higher predictive performance than other parallel machine learning algorithms.


\vspace{\stretch{1}}
\pagebreak

\section{Proof of Proposition~\ref{prop:probOfRadonBeingBad} and Theorem~\ref{thm:confBoostofIteratedRadonPoint}}
\label{sec:appendix:proofOfPropProbOfRadonBeingBad}
In order to prove Proposition~\ref{prop:probOfRadonBeingBad} and consecutively Theorem~\ref{thm:confBoostofIteratedRadonPoint}, we first investigate some properties of Radon points and convex functions. 
We proof these properties for the more general case of quasi-convex functions. Since every convex function is also quasi-convex, the results hold for convex functions as well. A quasi-convex function is defined as follows.
\begin{defin}
	A function $\quality:\solutionset\rightarrow\R$ is called \emph{quasi-convex} if all its sublevel sets are convex, i.e.,
	\[
	\forall \theta\in\R :\{\solution\in\solutionset \mid \qualityOf{\solution} < \theta\}\text{ is convex.}
	\]
\end{defin}
First we give a different characterisation of quasi-convex functions. 
\begin{prop}
	A function $\quality:\solutionset\rightarrow\R$ is quasi-convex if and only if for all $S\subseteq\solutionset$ and all $s'\in\convhull{S}$ there exists an $s\in S$ with $\qualityOf{s}\geq \qualityOf{s'}$.
	\label{prop:convFuncOneWorseThanRadon}
\end{prop}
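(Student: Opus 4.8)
The plan is to prove both implications directly from the definition of quasi-convexity via (strict) sublevel sets, using the elementary fact that the convex hull $\convhull{S}$ of any $S\subseteq\solutionset$ is contained in every convex set that contains $S$. This fact follows by induction on the number of points appearing in a convex combination: a set closed under binary convex combinations is closed under all finite convex combinations, and $\convhull{S}$ consists precisely of such finite convex combinations of elements of $S$.

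For the ``only if'' direction I would assume $\quality$ is quasi-convex, fix $S\subseteq\solutionset$ and $s'\in\convhull{S}$, and argue by contradiction. If $\qualityOf{s}<\qualityOf{s'}$ held for every $s\in S$, then $S$ would be contained in the sublevel set $\{\solution\in\solutionset\mid \qualityOf{\solution}<\qualityOf{s'}\}$, which is convex by assumption; hence $\convhull{S}$, and in particular $s'$, would lie in this set, giving the impossible $\qualityOf{s'}<\qualityOf{s'}$. Therefore some $s\in S$ satisfies $\qualityOf{s}\geq\qualityOf{s'}$ (the case $S=\emptyset$ being vacuous). For the ``if'' direction I would assume the stated property and verify that each sublevel set $L_\theta=\{\solution\in\solutionset\mid \qualityOf{\solution}<\theta\}$ is convex: given $x,y\in L_\theta$ and $z\in\convhull{\{x,y\}}$, applying the property to $S=\{x,y\}$ yields some $s\in\{x,y\}$ with $\qualityOf{s}\geq\qualityOf{z}$, so $\qualityOf{z}\leq\max\{\qualityOf{x},\qualityOf{y}\}<\theta$ and thus $z\in L_\theta$; hence $L_\theta$ is convex and $\quality$ is quasi-convex.

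Since every step is a one-line set-theoretic manipulation, I do not anticipate a genuine obstacle here; the only points requiring minor care are keeping the inequalities strict where the definition demands it (the sublevel sets are defined by $<\theta$, so the contradiction argument must deliver the strict statement $\qualityOf{s'}<\qualityOf{s'}$), and explicitly justifying that $\convhull{S}$ is contained in any convex superset of $S$, which is the inductive observation mentioned above.
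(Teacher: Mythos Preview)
Your proposal is correct and follows essentially the same approach as the paper. The forward direction is identical (contradiction via the strict sublevel set at level $\qualityOf{s'}$); for the reverse direction the paper also argues by contradiction, applying the assumed property to the entire sublevel set rather than to a two-point set $\{x,y\}$ as you do, but this is only a cosmetic difference.
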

\begin{proof}
	~
	\begin{itemize}
		\item[$(\Rightarrow)$]Suppose this direction does not hold. Then there is a quasi-convex function $\quality$, a set $S\subseteq\solutionset$, and an $s'\in \convhull{S}$ such that for all $s\in S$ it holds that $\qualityOf{s} < \qualityOf{s'}$ (therefore $s'\notin S$). Let $C=\{c\in\solutionset\mid\qualityOf{c} < \qualityOf{s'}\}$. As $S\subseteq C=\convhull{C}$ we also have that $\convhull{S}\subseteq\convhull{C}$ which contradicts $\convhull{S}\ni s'\notin C$.
		\item[$(\Leftarrow)$]Suppose this direction does not hold. Then there exists an $\eps$ such that ${S=\{s\in\solutionset\mid\qualityOf{s}<\eps\}}$ is not convex and therefore there is an $s'\in\convhull{S}\setminus S$. By assumption $\exists s\in S:\qualityOf{s}\geq\qualityOf{s'}$. Hence $\qualityOf{s'} < \eps$ and we have a contradiction since this would imply $s'\in S$.
	\end{itemize}
\end{proof}
The next proposition concerns the value of any convex function at a Radon point.
\begin{prop}
	For every set $S$ with Radon point $\radonPoint$ and every quasi-convex function $\quality$ it holds that 
	$\left| \{ s\in S \mid \qualityOf{s} \geq \qualityOf{\radonPoint} \} \right| \geq 2$.
	\label{prop:numberOfWorsePointsThanRadon}
\end{prop}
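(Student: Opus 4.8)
The plan is to apply Proposition~\ref{prop:convFuncOneWorseThanRadon} separately to each side of the Radon partition underlying $\radonPoint$, and then use disjointness of the partition to conclude that the two witnesses obtained are distinct. Concretely, let $A, B \subset S$ be a Radon partition witnessing that $\radonPoint$ is a Radon point of $S$, so that $A \cap B = \emptyset$ and $\radonPoint \in \convhull{A} \cap \convhull{B}$.

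First I would invoke Proposition~\ref{prop:convFuncOneWorseThanRadon} with the set $A$ and the point $s' = \radonPoint \in \convhull{A}$: since $\quality$ is quasi-convex, there exists $a \in A$ with $\qualityOf{a} \geq \qualityOf{\radonPoint}$. Symmetrically, applying the same proposition to $B$ and $s' = \radonPoint \in \convhull{B}$ yields some $b \in B$ with $\qualityOf{b} \geq \qualityOf{\radonPoint}$. Both $a$ and $b$ lie in $S$ (as $A, B \subset S$) and both belong to the set $\{ s \in S \mid \qualityOf{s} \geq \qualityOf{\radonPoint} \}$.

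Finally, because $A$ and $B$ are disjoint, $a \in A$ and $b \in B$ forces $a \neq b$, so the set contains at least two distinct elements, giving $\left| \{ s \in S \mid \qualityOf{s} \geq \qualityOf{\radonPoint} \} \right| \geq 2$, as claimed. Since every convex function is quasi-convex, the statement for convex $\quality$ follows a fortiori.

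There is essentially no obstacle here: the only point requiring a moment's care is ensuring the two witnesses are genuinely distinct, which is immediate from $A \cap B = \emptyset$ in the definition of a Radon partition. The argument also implicitly uses that $S$ has a Radon point at all — i.e., that the hypothesis of the statement is non-vacuous — but that is assumed in the phrasing ``every set $S$ with Radon point $\radonPoint$''.
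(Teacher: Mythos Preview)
Your proof is correct and follows essentially the same approach as the paper: apply Proposition~\ref{prop:convFuncOneWorseThanRadon} to each side of the Radon partition and use disjointness to ensure the two witnesses are distinct. The paper phrases it for an arbitrary family of pairwise disjoint sets $A_i$ with $\radonPoint\in\bigcap_i\convhull{A_i}$ (a trivially stronger formulation), but the argument and key lemma are identical.
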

\begin{proof}
	We show a slightly stronger result: Take any family of pairwise disjoint sets $A_i$ with $\bigcap_i \convhull{A_i}\neq\emptyset$ and $\radonPoint\in\bigcap_i \convhull{A_i}$. From proposition~\ref{prop:convFuncOneWorseThanRadon} follows directly the existence of an $a_i\in A_i$ such that $\qualityOf{a_i} \geq \qualityOf{\radonPoint}$. The desired result follows then from $a_i\neq a_j\Leftarrow i\neq j$.
\end{proof}
We are now ready to proof Proposition~\ref{prop:probOfRadonBeingBad} and Theorem~\ref{thm:confBoostofIteratedRadonPoint} (which we re-state here for convenience).
\addtocounter{thmrestate}{2}
\begin{thmrestate}
	Given a probability measure $\prob$ over a hypothesis space $\solutionset$ with finite Radon number $r$, let $F$ denote a random variable with distribution $\prob$. Denote by $\radonPoint_1$ the random variable obtained by computing the Radon point of $r$ random points drawn iid according to $\prob$ and by $\prob_1$ its distribution. For any $h\in\N$, let $\IRP$ denote the Radon point of $r$ random points drawn iid from $\prob_{h-1}$ and by $\prob_{h}$ its distribution.
	Then for any convex function $\quality:\solutionset\rightarrow\R$ and all $\eps\in\R$ it holds that 
	\[
	\prob\left(\quality(\IRP)>\eps\right)\leq\left(r \prob\left(\quality(F)>\eps\right)\right)^{2^h}\enspace .
	\]
\end{thmrestate}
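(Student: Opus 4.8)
The plan is to reduce the statement to its $h=1$ case --- which is precisely Proposition~\ref{prop:probOfRadonBeingBad} --- and then to propagate the bound through the aggregation tree by induction on $h$, peeling off one level at a time. For the base case, write $\radonPoint_1$ as the Radon point of iid points $F_1,\dots,F_r\sim\prob$; since a convex $\quality$ is in particular quasi-convex, Proposition~\ref{prop:numberOfWorsePointsThanRadon} tells us that whenever $\quality(\radonPoint_1)>\eps$ at least two of the $F_i$ satisfy $\quality(F_i)\ge\quality(\radonPoint_1)>\eps$. Hence
\[
\{\quality(\radonPoint_1)>\eps\}\ \subseteq\ \bigcup_{1\le i<j\le r}\bigl(\{\quality(F_i)>\eps\}\cap\{\quality(F_j)>\eps\}\bigr),
\]
and a union bound over the $\binom{r}{2}$ pairs, using independence of the $F_i$, gives $\prob(\quality(\radonPoint_1)>\eps)\le\binom{r}{2}\prob(\quality(F)>\eps)^2\le(r\,\prob(\quality(F)>\eps))^2$, which is Proposition~\ref{prop:probOfRadonBeingBad}.

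For the inductive step, the key observation is that by construction $\IRP$ is the Radon point of $r$ points drawn iid from $\prob_{h-1}$, that is, of $r$ independent copies of $\radonPoint_{h-1}$. Hence the base-case argument applies verbatim with the pair $(\prob,F)$ replaced by $(\prob_{h-1},\radonPoint_{h-1})$, giving $\prob(\quality(\IRP)>\eps)\le(r\,\prob(\quality(\radonPoint_{h-1})>\eps))^2$; substituting the induction hypothesis $\prob(\quality(\radonPoint_{h-1})>\eps)\le(r\,\prob(\quality(F)>\eps))^{2^{h-1}}$ and using $(x^{2^{h-1}})^2=x^{2^h}$ closes the induction.

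The step that needs care is the bookkeeping of the multiplicative constant --- $\binom{r}{2}$, hence at most $r^2$ --- that is introduced at each of the $h$ levels: iterating the two-point estimate naively accumulates this factor, so the induction hypothesis has to be phrased so that these constants telescope into the single factor $r$ appearing in the statement, which is where it matters that the quantities being squared are small (e.g.\ the standing requirement $\confBase\le\sfrac{1}{2r}$ on the base confidence). An equivalent route that isolates this issue is to skip the level-by-level recursion entirely: a bad root forces at least $2^h$ bad leaves forming a binary subtree of the $r$-ary aggregation tree, and there are at most $\binom{r}{2}^{2^h-1}$ such subtrees, so a single union bound over them handles the constant once rather than $h$ times. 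In either case the probabilistic content is merely independence plus a union bound; the real work is the geometric fact in Proposition~\ref{prop:numberOfWorsePointsThanRadon}, which is already in hand.
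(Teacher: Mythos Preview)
Your approach is exactly the paper's: derive Proposition~\ref{prop:probOfRadonBeingBad} from Proposition~\ref{prop:numberOfWorsePointsThanRadon} via a union bound over the $\binom{r}{2}$ pairs, then iterate. The paper's entire argument for the theorem is the single sentence ``Repeated application of the proposition proves Theorem~\ref{thm:confBoostofIteratedRadonPoint}.''

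However, the inductive step as you wrote it does not actually close, and your third paragraph flags this without repairing it. Writing $\confBase=\prob(\quality(F)>\eps)$, substituting the hypothesis $\prob(\quality(\radonPoint_{h-1})>\eps)\le(r\confBase)^{2^{h-1}}$ into $\prob(\quality(\IRP)>\eps)\le\bigl(r\,\prob(\quality(\radonPoint_{h-1})>\eps)\bigr)^2$ yields
\[
\bigl(r\cdot(r\confBase)^{2^{h-1}}\bigr)^{2}\ =\ r^{2}\,(r\confBase)^{2^{h}},
\]
not $(r\confBase)^{2^{h}}$: the outer $r$ in $(r\,\cdot)^{2}$ is not absorbed by the identity $(x^{2^{h-1}})^{2}=x^{2^{h}}$. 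Unwinding the recursion from level $0$ gives $r^{2^{h+1}-2}\confBase^{2^{h}}=(r^{2}\confBase)^{2^{h}}/r^{2}$. Your ``global'' binary-subtree count produces $\binom{r}{2}^{2^{h}-1}\confBase^{2^{h}}$, and for $r\ge 4$, $h\ge 3$ one already has $\binom{r}{2}^{2^{h}-1}>r^{2^{h}}$ (e.g.\ $r=4$, $h=3$: $6^{7}=279936>65536=4^{8}$), so that route does not reach the stated constant either. The assumption $\confBase\le\sfrac{1}{2r}$ only guarantees the resulting bound is below $1$; it does not make the accumulated factors ``telescope'' down to a single $r$ per base factor. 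What both of your arguments genuinely deliver is a bound of the shape $(r^{2}\confBase)^{2^{h}}$, not $(r\confBase)^{2^{h}}$. The paper's one-line proof carries the identical slip---you have reproduced its approach faithfully, including this loose end.
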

\begin{proof}[Proof of Proposition~\ref{prop:probOfRadonBeingBad} and Theorem~\ref{thm:confBoostofIteratedRadonPoint}]
	By proposition~\ref{prop:numberOfWorsePointsThanRadon}, for any Radon point $\radonPoint$ of a set $S$ there must be two points $a,b\in S$ with $\qualityOf{a},\qualityOf{b} \geq \qualityOf{\radonPoint}$. Henceforth, the probability of $\qualityOf{\radonPoint} > \eps$ is less than or equal to the probability of the pair $a,b$ having $\qualityOf{a},\qualityOf{b} > \eps$. Proposition~\ref{prop:probOfRadonBeingBad} follows by an application of the union bound on all pairs from $S$.
	Repeated application of the proposition proves Theorem~\ref{thm:confBoostofIteratedRadonPoint}.
\end{proof}

\subsubsection*{Acknowledgements}
Part of this work was conducted while Mario Boley, Olana Missura, and Thomas G{\"a}rtner were at the University of Bonn and partially funded by the German Science Foundation (DFG, under ref. GA 1615/1-1 and GA 1615/2-1). 
The authors would like to thank Dino Oglic, Graham Hutton, Roderick MacKenzie, and Stefan Wrobel for valuable discussions and comments.

\newpage
\appendix
\label{sec:appendix}
\section{Theory}

\subsection{Proof of Theorem~\ref{prop:polylogRuntime}}
\label{sec:appendix:proofThmPolylogRuntime}
In the following, Theorem~\ref{prop:polylogRuntime} is proven we which re-state here for convenience.
\begin{thmrestate}
	The~\parallelSchemeName~with a consistent and efficient regularised risk minimisation algorithm on a hypothesis space with finite Radon number has polylogarithmic runtime on quasi-polynomially many processing units if the Radon number can be upper bounded by a function polylogarithmic in the sample complexity of the efficient regularised risk minimisation algorithm.y
\end{thmrestate}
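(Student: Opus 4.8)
The plan is to assemble the statement directly from the runtime decomposition already derived in Section~\ref{se:rt}, tracking how each quantity scales once we substitute the prescribed choice of $h$. First I would recall from Equation~\eqref{eg:runtimeRadonAlgo} that the \parallelSchemeName, run on $c=r^h$ processing units, has runtime $\runtime_{\radonAlgo,h}(\samplesize) = \runtime_\algo(\basesamplesize) + h r^3$, where $\basesamplesize \geq \samplesize_0^\algo(\eps,\confBase)$ is the per-processor sample size and $\samplesize = r^h \basesamplesize$. The target to compare against is $\runtime_\algo(M)$ with $M \geq \samplesize_0^\algo(\eps,\conf)$, i.e.\ the sample size at which the \emph{sequential} base learner attains the same $(\eps,\conf)$-guarantee; by Equation~\eqref{eq:sampleSizeofSequential} this is $M = \left(\alpha_\eps + 2^h \beta_\eps \ld\tfrac{1}{r\confBase}\right)^k$, which up to logarithmic and constant factors behaves like $2^h \samplesize_0^\algo(\eps,\confBase)$. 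The point of the theorem is that we are free to \emph{pick} $h$, since it is an input parameter of the scheme, and the choice $h \approx \ld M - \ld\ld M$ makes $\basesamplesize \approx M/2^h \approx \ld M$.

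Next I would substitute this choice into Equation~\eqref{eg:runtimeRadonAlgo}. With $\runtime_\algo(n) \in \bigo(n^\kappa)$ we get $\runtime_\algo(\basesamplesize) \in \bigo(\ld^\kappa M)$, and the aggregation term is $h r^3 \in \bigo(r^3 \ld M)$ because $h \leq \ld M$. Hence $\runtime_{\radonAlgo,h} \in \bigo(\ld^\kappa M + r^3 \ld M)$. This is polylogarithmic in $M$ precisely when $r^3$ is polylogarithmic in $M$ — which is the hypothesis ``the Radon number can be upper bounded by a function polylogarithmic in the sample complexity of the efficient regularised risk minimisation algorithm.'' Since $M \leq \samplesize_0^\algo(\eps,\conf)$ is itself quasi-polynomial in $\sfrac{1}{\eps}$ and $\log\sfrac{1}{\confBase}$ (indeed $M$ is polynomial in $2^h$ and in $\alpha_\eps,\beta_\eps$), ``polylogarithmic in $M$'' translates into polylogarithmic in $\samplesize_0^\algo(\eps,\conf)$, which is the sense of ``polylogarithmic runtime'' used in the paper's definition of effective parallelisation. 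For the processor count: the scheme uses $c = r^h$ units; with $r$ polylogarithmic in $M$ and $h \leq \ld M$, $r^h \leq (\polylog M)^{\ld M} = M^{\bigo(\log\log M)}$, i.e.\ quasi-polynomial in $M$ and therefore quasi-polynomial in $\sfrac{1}{\eps}$ and $\log\sfrac{1}{\confBase}$ — matching the claimed ``quasi-polynomially many processing units.'' I would also note that consistency of $\radonAlgo$ is already established by Theorem~\ref{thm:confBoostofIteratedRadonPoint} via the sample complexity $\samplesize_0^\radonAlgo(\eps,\conf) = r^h \samplesize_0^\algo(\eps,\confBase)$, so there is nothing further to prove on that front.

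The main obstacle is bookkeeping rather than any deep idea: one must be careful that the choice $h \approx \ld M - \ld\ld M$ is self-consistent, since $M$ depends on $h$ through Equation~\eqref{eq:sampleSizeofSequential}. Concretely, $M$ is a fixed point of $h \mapsto \samplesize_0^\algo(\eps,(r\confBase)^{2^h})$, and one should argue (e.g.\ by a monotonicity/continuity argument in $h$, treating $h$ as a real parameter and then rounding) that there is a valid integer $h$ with $\basesamplesize = \lceil M/r^h \rceil$ on the order of $\ld M$ and $r^h$ on the order of $M/\ld M$. The secondary subtlety is making the chain of implications ``polylog in $M$'' $\Rightarrow$ ``polylog in $\samplesize_0^\algo(\eps,\conf)$'' $\Rightarrow$ ``polylog in $\samplesize_0^\radonAlgo(\eps,\conf)$'' precise; this follows because all three sample-complexity quantities differ by at most factors that are themselves quasi-polynomial in the same parameters, and composing a polylogarithm with a quasi-polynomial still yields a function that is, in the appropriate variables, polylogarithmic after the dust settles — but this is exactly the place where one has to state carefully \emph{in which variable} ``polylogarithmic'' is meant at each step, which is the only genuinely delicate point in the argument.
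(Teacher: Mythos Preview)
Your plan is essentially the paper's own proof (Appendix~\ref{sec:appendix:proofThmPolylogRuntime}): fix the sequential target $M=\samplesize_0^\algo(\eps,\conf)$, choose $h$ so that the per-processor sample $n$ drops to $\Theta(\ld M)$, and read off $\runtime_{\radonAlgo}\in\bigo(\ld^\kappa M + r^3\ld M)$ together with $c=r^h\in\bigo(M^{\ld r})$, which is quasi-polynomial once $r$ is polylogarithmic in $M$.

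Two small slips are worth correcting before you write it out. First, from Equation~\eqref{eq:sampleSizeofSequential} one has $M\approx 2^{hk}\,n$, not $2^h n$ --- the exponent $k$ in the sample-complexity form survives --- so the right choice is $h\approx\tfrac{1}{k}(\ld M-\ld\ld M)$; the paper's Section~\ref{se:rt} sketch also suppresses this $1/k$, but the formal proof uses it. Second, in your ``obstacle'' paragraph you write $\basesamplesize=\lceil M/r^h\rceil$ and ``$r^h$ on the order of $M/\ld M$,'' which conflates $r^h$ with $2^{hk}$; these differ by exactly the data-inefficiency factor and coincide only when $r=2^k$. Finally, the circularity you flag is not a real obstacle: the paper treats $M$ as \emph{given} by the target pair $(\eps,\conf)$, derives $h$ from it, and then verifies a posteriori that the induced $\confBase$ satisfies $\confBase\le 1/(2r)$ --- no fixed-point argument is needed.
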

\begin{proof}
	We assume the base learning algorithm $\algo$ to be a  consistent and efficient regularised risk minimisation algorithm on a hypothesis space with finite Radon number. 
	Let $r\in\N$ be the Radon number of the hypothesis space and
	\[
	\samplesize_0^\algo(\eps, \confBase)=\left(\alpha_\eps+\beta_\eps\ld\frac{1}{\confBase}\right)^k
	\]
	be its sample complexity with $\alpha_\eps,\beta_\eps \geq 0$. In the following, we want to compare the runtime of the~\parallelSchemeName~for achieving an $(\eps,\conf)$-guarantee to the runtime of the application of the base learning algorithm for achieving the same $(\eps,\conf)$-guarantee. 
	
	To achieve an $(\eps,\conf)$-guarantee, the~\parallelSchemeName~with parameter $h\in\N$ requires $\samplesize=nr^h$ examples (i.e., with $r^h$ processing units), where $n$ denotes the size of the data subset available to each parallel instance of the base learning algorithm. Since $\conf=(r\confBase)^{2^h}$, each base learning algorithm needs to achieve an $(\eps,\confBase)$-guarantee and thus requires at least
	\begin{equation}
	n=\left\lceil\samplesize_0^\algo(\eps,\confBase)\right\rceil\leq\left(\alpha_\eps+\beta_\eps\ld\frac{1}{\confBase}\right)^k+1
	\label{eq:proof:thm5:basen}
	\end{equation}
	examples. 
	The application of the base learning algorithm requires at least (cf. Equation~\ref{eq:sampleSizeofSequential})
	\begin{equation}
	M = \left\lceil\samplesize^\algo_0(\eps,\conf)\right\rceil =\left\lceil\left(\alpha_{\eps}+2^h\cdot\beta_{\eps}\ld\frac{1}{r\confBase}\right)^{k}\right\rceil=\left\lceil\left(\alpha_{\eps}+2^h\left(\beta_{\eps}\ld\frac{1}{\confBase}-\beta_\eps\ld r\right)\right)^{k}\right\rceil\enspace .
	\label{eq:proof:thm5:M}
	\end{equation}
	Solving Equation~\ref{eq:proof:thm5:basen} for $\beta_\eps\ld\sfrac{1}{\confBase}$ yields
	\[
	\beta_{\eps}\ld\frac{1}{\confBase} \leq (n-1)^{\frac{1}{k}}-\alpha_\eps\enspace .
	\]
	By inserting this into Equation~\ref{eq:proof:thm5:M} we obtain
	\begin{equation}
	M \geq \left\lceil\left(\alpha_\eps + 2^h\left((n-1)^\frac{1}{k}-\alpha_\eps-\beta_\eps\ld r\right)\right)^k\right\rceil \in\bigo\left(2^h\left(n - \ld r\right)\right)\enspace .
	\label{eq:proof:thm5:samplesizeOfAlgo}
	\end{equation}
	In the following, we show that for the choice of
	\begin{equation}
	h=\left\lceil\frac{1}{k}\left(\ld M - \ld\ld M\right)\right\rceil\enspace ,
	\label{eq:proof:thm5:h}
	\end{equation}
	the runtime of the~\parallelSchemeName~is polylogarithmic in $M$, i.e., polylogarithmic in the number of examples the base learning algorithm requires to achieve the same $(\eps,\conf)$-guarantee. For that, the~\parallelSchemeName~requires quasi-polynomially many processors in $M$. Note that the~\parallelSchemeName~processes $\samplesize\geq M$ many samples to achieve that $(\eps,\conf)$-guarantee, which is more than the base learning algorithm requires by a factor in $\bigo\left(\sfrac{r^h}{2^{hk}}\right)$.
	
	Thus, we need to express the runtime of the~\parallelSchemeName, that is,
	\[
	\runtime_{\radonAlgo,h}(\samplesize)=\runtime_\algo\left(\frac{\samplesize}{r^h}\right) + r^3\log_r r^h=\runtime_\algo\left(n\right) + r^3\log_r r^h\enspace ,
	\]
	in terms of $M$ instead of $\samplesize$. First, we express $n$ in terms of $M$, by solving Equation~\ref{eq:proof:thm5:samplesizeOfAlgo} for $n$ which yields
	\begin{equation}
	n \leq \left(\left(\alpha_\eps\left(1-\frac{1}{2^h}\right)+\beta_\eps\ld r + \frac{1}{2^h}M^{\frac{1}{k}}\right)^k+1\right) \in\bigo\left(\log^k_2 r + \frac{1}{2^{hk}}M\right)\enspace .
	\label{eq:proof:thm5:n}
	\end{equation}
	Since $\algo$ is efficient, $\runtime_{\algo}(n)\in\bigo(n^\kappa)$ and thus the runtime of the~\parallelSchemeName~in terms of $M$, denoted $\runtime_{\radonAlgo}^M$, is
	\[
	\runtime_{\radonAlgo}^M = \runtime_\algo\left(n\right) + r^3\log_r r^h \in \bigo\left(\left(\log^k_2 r + \frac{1}{2^{hk}}M\right)^\kappa+r^3\ld r^h\right)\enspace .
	\]
	Inserting $h$ as in Equation~\ref{eq:proof:thm5:h} yields
	\begin{equation*}
	\begin{split}
	\left(\log^k_2 r + \frac{1}{2^{hk}}M\right)^\kappa+r^3\ld\frac{M}{\ld M}=&\left(\log^k_2 r + \frac{M}{2^{k\frac{1}{k}\ld\frac{M}{\ld M}}}\right)^\kappa+r^3\ld\frac{M}{\ld M}\\
	=&\left(\log^k_2 r + \frac{M}{\frac{M}{\ld M}}\right)^\kappa+r^3\ld\frac{M}{\ld M}\\
	=&\left(\log^k_2 r + \ld M\right)^\kappa+r^3\ld\frac{M}{\ld M}\enspace .
	\end{split}
	\end{equation*}
	This shows that
	\[
	\runtime_\radonAlgo^M\in\bigo\left(\ld^\kappa M+\ld^{k\kappa}r + r^3\ld M\right)\enspace .
	\]
	Thus, the runtime of the~\parallelSchemeName~to achieve an $(\eps,\conf)$-guarantee in terms of $M$ (i.e., the number of samples required by the base learning algorithm to achieve that guarantee) is in $\bigo\left(\ld^\kappa M+\ld^{k\kappa}r + r^3\ld M\right)$ and therefore polylogarithmic in $M$.
	
	We now determine the number of processing units $c=r^h$ in terms of $M$. For that, observe that $h$ as in Equation~\ref{eq:proof:thm5:h} can be expressed as 
	\[
	h={\left\lceil\frac{1}{k}\left(\ld M - \ld\ld M\right)\right\rceil}= {\left\lceil\frac{1}{k}\left(\ld\frac{M}{\ld M}\right)\right\rceil}={\left\lceil\frac{\ld r}{k}\log_r\frac{M}{\ld M}\right\rceil}\enspace .
	\]
	With this the number of processing units is
	\begin{equation*}
	\begin{split}
	\cores = r^h 
	\in\bigo\left(M^{\ld r}\right)\enspace .
	\end{split}
	\end{equation*}
	%
\end{proof}
As mentioned in Section~\ref{se:rt}, for the~\parallelSchemeName~to achieve an $(\eps,\conf)$-guarantee each instance of its base learning algorithm has to achieve $\confBase \leq \sfrac{1}{2r}$. Thus, the sample size with respect to $M$ has to be large enough so that each base learner achieves this minimum $\confBase$. Similar to the proof of Theorem~\ref{prop:polylogRuntime}, we can express this minimum sample size in terms of $M$: The base learning algorithm achieves $\confBase \leq \sfrac{1}{2r}$ for $M\geq 2^{k\beta_\eps(\alpha_\eps + 1)}$. This can be shown by first observing that Equation~\ref{eq:proof:thm5:M} implies that for each instance of the base learning algorithm to achieve $\confBase \leq \sfrac{1}{2r}$ it is required that
\begin{equation}
M \geq \left(\alpha_{\eps}+2^h\cdot\beta_{\eps}\ld\frac{1}{r\frac{1}{2r}}\right)^{k}= \left(\alpha_{\eps}+2^h\beta_{\eps}\right)^{k}=\left(\alpha_{\eps}+\left(\frac{M}{\ld M}\right)^{\frac{1}{k}}\beta_{\eps}\right)^{k}\enspace .
\label{eq:proof:thm5:intermediateM}
\end{equation}
This holds for $M\geq 2^{k\beta_\eps(\alpha_\eps + 1)}\geq2^{k\beta_\eps}$, since
\begin{equation*}
\begin{split}
M&\underbrace{\geq}_{\ld M \geq \beta_{\eps}^k\left(\alpha_\eps+1\right)^k} \frac{M}{\ld M}\beta_{\eps}^k\left(\alpha_\eps+1\right)^k\\
&\underbrace{\geq}_{\left(\frac{M}{\ld M}\right)^{\frac{1}{k}}\beta_{\eps} \geq 1}\left(\left(\frac{M}{\ld M}\right)^{\frac{1}{k}}\beta_{\eps}\left(\frac{\alpha_\eps}{\left(\frac{M}{\ld M}\right)^{\frac{1}{k}}\beta_{\eps}}+1\right)\right)^k=\left(\alpha_{\eps}+\left(\frac{M}{\ld M}\right)^{\frac{1}{k}}\beta_{\eps}\right)^{k} \enspace .
\end{split}
\end{equation*}
After having proven that the~\parallelSchemeName~has polylogarithmic runtime on quasi-polynomially many processors, in the following section we analyse the speed-up over the base learning algorithm which achieves the same $(\eps,\conf)$-guarantee.

\subsection{Analysis of the Speed-Up of the~\parallelSchemeName}
\label{sec:appendix:proofPropRuntime}
In this section, we analyse the speed-up of the~\parallelSchemeName~over the execution of the base learning algorithm when both achieve the same $(\eps,\conf)$-guarantee, as well as its inefficiency~\citep{kruskal1990complexity} and its data inefficiency, i.e., how much more data the~\parallelSchemeName~requires compared to the base learning algorithm which achieves the same $(\eps,\conf)$-guarantee.
For that, recall that the sample complexity of the base learning algorithm for a given $\eps >0$, $0<\conf < 1$ is
\[
\samplesize_0^\algo(\eps,\conf) = \left(\alpha_{\eps}+\beta_{\eps}\log_2\frac{1}{\conf}\right)^{k} \enspace .
\]
We assume that  $\alpha_\eps\in\Theta(\eps^{-1})$ and $\beta_\eps\in\Theta(\eps^{-1})$ (see for example Lemma~\ref{lm:finiteVC} and Lemma~\ref{lm:finiteRademacher}).
Following the notion of~\citet{hanneke2016optimal} the sample complexity can then be expressed as 
\begin{equation}
\samplesize_0^\algo(\eps,\conf)\in\Theta\left(\left(\frac{1}{\eps} + \frac{1}{\eps}\ld\frac{1}{\conf}\right)^k\right)=\Theta\left(\left(\frac{1}{\eps}\ld\frac{1}{\conf}\right)^k\right) \enspace.
\label{eq:app:sampleSizeTheta}
\end{equation}
%
Similar to~\citet{kruskal1990complexity}, we assume the base algorithm to have a runtime polynomial in $\samplesize$, i.e., 
\begin{equation}
\begin{split}
\runtime_{\algo}\in\Theta\left(\samplesize^\kappa\right)=\Theta\left(\left(\frac{1}{\eps}\ld\frac{1}{\conf}\right)^{k\kappa}\right)\enspace .
\end{split}
\label{eq:landauRuntimeAlgo}
\end{equation}
The~\parallelSchemeName~runs $\algo$ in parallel on $c$ processors to obtain $r^h$ weak hypotheses with $(\eps,\confBase)$-guarantee. It then combines the obtained solutions $h$ times---level-wise in parallel---calculating the Radon point (which takes time $r^3$). 
In this paper we assume the number of available processors to be abundant and thus set $c=r^h$. With this, the runtime of the~\parallelSchemeName~is
\begin{equation}
\begin{split}
\runtime_{\radonAlgo}\in\Theta\left(  \left(\frac{1}{\eps}\ld\frac{1}{\confBase}\right)^{k\kappa} + hr^3\right) \enspace .
\end{split}
\label{eq:optimalRuntimeRadonAlgo}
\end{equation}
We now provide an analysis on the speed-up for $c=r^h$ and arbitrary $h\in\N$.
\begin{prop}
	Given a  polynomial time consistent regularised risk minimisation algorithm $\algo$ using a hypothesis space with finite Radon number $r\in\N$ and runtime as in Equation~\ref{eq:landauRuntimeAlgo}, the~\parallelSchemeName~run with parameter $h\in\N$ on $r^h$ processors. Then, the ratio of the runtime of the base learner over the runtime of the~\parallelSchemeName, denoted the speed-up~\citep{kruskal1990complexity}
	\[
	\frac{\runtime_{\algo}}{\runtime_{\radonAlgo}}\enspace ,
	\]
	is in
	\[
	\Theta\left(\frac{2^{hk\kappa}}{1+\frac{hr^3}{\left(\frac{1}{\eps}\ld\frac{1}{\confBase}\right)^{k\kappa}}}\right)\enspace .
	\]
\end{prop}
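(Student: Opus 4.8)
The plan is to combine the two runtime expressions already derived in this section---Equation~\ref{eq:landauRuntimeAlgo} for the base learner achieving the $(\eps,\conf)$-guarantee and Equation~\ref{eq:optimalRuntimeRadonAlgo} for the~\parallelSchemeName---after re-expressing the former in terms of $\confBase$ rather than $\conf$, so that the quotient collapses to the stated form. All the ingredients are already in place; the argument is essentially a substitution followed by careful bookkeeping of $\Theta$-notation.

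First I would rewrite $\runtime_\algo$, the runtime required by the base learner to attain the same $(\eps,\conf)$-guarantee. By Equation~\ref{eq:confidenceFromConfBase} we have $\conf=(r\confBase)^{2^h}$, hence $\ld\frac{1}{\conf}=2^{h}\ld\frac{1}{r\confBase}$, and inserting this into Equation~\ref{eq:landauRuntimeAlgo} gives
\[
\runtime_\algo\in\Theta\!\left(\left(\frac{1}{\eps}\,2^{h}\,\ld\frac{1}{r\confBase}\right)^{k\kappa}\right)=\Theta\!\left(2^{hk\kappa}\left(\frac{1}{\eps}\,\ld\frac{1}{r\confBase}\right)^{k\kappa}\right).
\]
Under the standing assumption $\confBase\leq\frac{1}{2r}$ one has $1\leq\ld\frac{1}{r\confBase}\leq\ld\frac{1}{\confBase}$, so $\ld\frac{1}{r\confBase}\in\Theta\!\left(\ld\frac{1}{\confBase}\right)$, and therefore $\runtime_\algo\in\Theta(2^{hk\kappa}\,\tau)$ where I abbreviate $\tau:=\left(\frac{1}{\eps}\ld\frac{1}{\confBase}\right)^{k\kappa}$. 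In the same notation, Equation~\ref{eq:optimalRuntimeRadonAlgo} reads $\runtime_\radonAlgo\in\Theta(\tau+hr^{3})$.

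Finally I would form the quotient. Since $\tau$ and $hr^{3}$ are strictly positive, the ratio of the two $\Theta$-classes equals $\Theta$ of the ratio of representatives, and dividing numerator and denominator by $\tau$ yields
\[
\frac{\runtime_\algo}{\runtime_\radonAlgo}\in\Theta\!\left(\frac{2^{hk\kappa}\,\tau}{\tau+hr^{3}}\right)=\Theta\!\left(\frac{2^{hk\kappa}}{1+\frac{hr^{3}}{\tau}}\right),
\]
which is precisely the claimed bound after substituting back $\tau=\left(\frac{1}{\eps}\ld\frac{1}{\confBase}\right)^{k\kappa}$. The only genuinely delicate point---the main, though fairly mild, obstacle---is the handling of the asymptotic notation: one has to verify that dividing a $\Theta$-bound by a $\Theta$-bound on a sum of two positive terms again gives a $\Theta$-bound on the quotient (valid for positive-valued functions), and that the constants hidden in $\Theta(2^{hk\kappa}\tau)$ and $\Theta(\tau+hr^{3})$ stem only from the fixed polynomials in Equations~\ref{eq:landauRuntimeAlgo} and~\ref{eq:optimalRuntimeRadonAlgo} and so do not depend on $h$, $\eps$, or $\confBase$. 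The replacement $\ld\frac{1}{r\confBase}\in\Theta\!\left(\ld\frac{1}{\confBase}\right)$ likewise uses $\confBase\leq\frac{1}{2r}$, without which the $-\ld r$ term could dominate; everything else is direct substitution.
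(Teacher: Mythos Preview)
Your proposal is correct and follows essentially the same route as the paper: substitute $\conf=(r\confBase)^{2^h}$ into the base learner's runtime, use $\confBase\leq\frac{1}{2r}$ to replace $\ld\frac{1}{r\confBase}$ by $\ld\frac{1}{\confBase}$ up to constants, and then take the quotient with Equation~\ref{eq:optimalRuntimeRadonAlgo}. The only minor slip is that your chain $1\leq\ld\frac{1}{r\confBase}\leq\ld\frac{1}{\confBase}$ by itself gives only the $O$-direction; for the full $\Theta$ you also need $\ld\frac{1}{\confBase}=\ld\frac{1}{r\confBase}+\ld r\leq(1+\ld r)\,\ld\frac{1}{r\confBase}$, which the paper supplies explicitly.
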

\begin{proof}
	In order to achieve an  $(\eps,\conf)$-guarantee, the~\parallelSchemeName~runs $r^h$ parallel instances of the the base learning algorithm on $\basesamplesize=\lceil \samplesize^\algo_0(\eps,\confBase)\rceil$ examples with $\confBase\leq\sfrac{1}{2r}$ so that $\conf=(r\confBase)^{2^h}$.
	To achieve the same $(\eps,\conf)$-guarantee, the base learning algorithm requires 
	\[
	M=\left\lceil \samplesize^\algo_0(\eps, \conf)\right\rceil=\left\lceil\left(2^h\cdot\frac{1}{\eps}\ld\frac{1}{r\confBase}\right)^{k}\right\rceil\in\Theta\left(\left(2^{h}\frac{1}{\eps}\ld\frac{1}{r\confBase}\right)^k\right) = \Theta\left(\left(2^{h}\frac{1}{\eps}\ld\frac{1}{\confBase}\right)^k\right)
	\]
	many examples. The last step follows from the fact that, since $\confBase \leq \sfrac{1}{2r}$, we have $\sfrac{1}{r\confBase}\geq\sfrac{2r}{r}\geq r$ and thus
	\begin{equation*}
	\begin{split}
	&\ld\frac{1}{r\confBase}\leq \ld\frac{1}{r\confBase} + \ld r = \ld\frac{1}{\confBase} \leq 2 \ld\frac{1}{r\confBase}\\
	\Rightarrow & \ld \frac{1}{\confBase} \in \Theta\left(\ld\frac{1}{r\confBase}\right)\ \Leftrightarrow\ \ld \frac{1}{r\confBase} \in \Theta\left(\ld\frac{1}{\confBase}\right)\enspace ,
	\end{split}
	\end{equation*}
	To achieve the $(\eps,\conf)$-guarantee, the base learning algorithm has a runtime of
	\[
	\runtime_{\algo}\ \in\ \Theta\left(M^\kappa\right)\ =\ \Theta\left(\left(2^h\frac{1}{\eps}\ld\frac{1}{\confBase}\right)^{k\kappa}\right)\enspace .
	\]
	Using $\runtime_{\radonAlgo}$ from Equation~\ref{eq:optimalRuntimeRadonAlgo}, we get that 
	\begin{equation*}
	\begin{split}
	\frac{\runtime_{\radonAlgo}}{\runtime_{\algo}} \in &\Theta\left(\frac{\left(\frac{1}{\eps}\ld\frac{1}{\confBase}\right)^{k\kappa} + hr^3}{\left(2^h\frac{1}{\eps}\ld\frac{1}{\confBase}\right)^{k\kappa}}\right) = \Theta\left(\frac{1}{2^{hk\kappa}}\left(1+\frac{hr^3}{\left(\frac{1}{\eps}\ld\frac{1}{\confBase}\right)^{k\kappa}}\right)\right)
	\end{split}
	\end{equation*}
	The speed-up then is
	\[
	\frac{\runtime_{\algo}}{\runtime_{\radonAlgo}}\in \Theta\left(\frac{2^{hk\kappa}}{1+\frac{hr^3}{\left(\frac{1}{\eps}\ld\frac{1}{\confBase}\right)^{k\kappa}}}\right)\enspace .
	\]
\end{proof}

Note that the runtime of the~\parallelSchemeName~for the case that $1\leq c\leq r^h$ is given by
\begin{equation*}
\begin{split}
\runtime_{\radonAlgo}\in\Theta\left( \frac{r^h}{c}\left( \left(\frac{1}{\eps}\ld\frac{1}{\confBase}\right)^{k\kappa}\right) + r^3\sum_{i=1}^h\left\lceil\frac{r^i}{c}\right\rceil\right) \enspace .
\end{split}
\end{equation*}
In this case, the speed-up is lower by a factor of $\sfrac{r^h}{c}$.

In the following, we analyse the inefficiency~\citep{kruskal1990complexity} of the~\parallelSchemeName, i.e., the ratio between the total number of operations executed by all processors and the work of the base learning algorithm.
\begin{prop}
	The~\parallelSchemeName~with a consistent and efficient regularised risk minimisation algorithm on a hypothesis space with finite Radon number has quasi-polynomial inefficiency if the Radon number is upper bounded by a function polylogarithmic in the sample complexity of the efficient regularised risk minimisation algorithm.
\end{prop}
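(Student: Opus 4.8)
The plan is to bound the \emph{inefficiency} of $\radonAlgo$ in the sense of~\citet{kruskal1990complexity}, i.e.\ the number of operations needed to simulate all $\cores=r^h$ processors of the \parallelSchemeName~on a single processing unit, divided by the work $\runtime_\algo(M)$ of the base learner that achieves the same $(\eps,\conf)$-guarantee on $M=\lceil\samplesize_0^\algo(\eps,\conf)\rceil$ samples. I would keep the parameter choice $h=\lceil\frac1k(\ld M-\ld\ld M)\rceil$ from the proof of Theorem~\ref{prop:polylogRuntime} and reuse the identities established there, in particular $n\in\bigo(\ld M+\log^k r)$, $M\in\Theta(2^{hk}n)$ (under the section's standing assumption $\alpha_\eps,\beta_\eps\in\Theta(\eps^{-1})$), and $r^h\in\bigo(M^{\ld r})$, where $n=\lceil\samplesize_0^\algo(\eps,\confBase)\rceil$.

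First I would tally the total work $W_{\radonAlgo}$. The learning phase runs $r^h$ copies of $\algo$, each on $n$ samples in time $\runtime_\algo(n)\in\bigo(n^\kappa)$, contributing $\bigo(r^hn^\kappa)$ operations; the aggregation phase computes at level $i$ exactly $r^i$ Radon points at cost $r^3$ each, so $\sum_i r^ir^3\in\bigo(r^hr^3)=\bigo(r^{h+3})$. Hence $W_{\radonAlgo}\in\bigo(r^hn^\kappa+r^{h+3})$, and, since $\runtime_\algo(M)\in\Theta(M^\kappa)=\Theta(2^{hk\kappa}n^\kappa)$,
\[
\frac{W_{\radonAlgo}}{\runtime_\algo(M)}\ \in\ \bigo\!\left(\frac{r^h}{2^{hk\kappa}}+\frac{r^{h+3}}{2^{hk\kappa}n^\kappa}\right)\ \subseteq\ \bigo\!\left(r^{h+3}\right),
\]
the last inclusion using only $2^{hk\kappa}\geq1$ and $n\geq1$. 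Even the cruder bound $W_{\radonAlgo}/\runtime_\algo(M)\in\bigo(W_{\radonAlgo})\subseteq\bigo(r^h\cdot\polylog(M)+r^{h+3})$ would suffice and avoids the $\Theta(\eps^{-1})$ assumption, since $n$ is polylogarithmic in $M$ whenever $r$ is.

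It then remains to classify $r^{h+3}=r^3\cdot r^h$ as quasi-polynomial in $M$, the sample complexity of the efficient base learner for the target guarantee. By hypothesis $r$ is polylogarithmic in that sample complexity, so $r^3\in\polylog(M)$ and $\ld r\in\bigo(\log\log M)$; together with $r^h\in\bigo(M^{\ld r})$ this gives $r^{h+3}\in 2^{\bigo((\ld M)(\log\log M))}\subseteq 2^{\polylog(M)}$, which is exactly quasi-polynomial in $M$, and the same estimate shows the data inefficiency $\samplesize/M\in\Theta(r^h/2^{hk})$ is quasi-polynomial as well. I expect the only real obstacle to be bookkeeping: faithfully transporting the $n$--$M$--$h$--$r$ relations from the proof of Theorem~\ref{prop:polylogRuntime}, and recognising that the superficially alarming ratio $r^h/2^{hk\kappa}$ equals $M^{\Theta(\ld r)}$ with $\ld r\in\bigo(\log\log M)$ and is therefore only quasi-polynomial rather than genuinely super-polynomial.
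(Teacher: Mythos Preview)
Your approach is essentially the paper's: fix $h=\lceil\frac1k(\ld M-\ld\ld M)\rceil$, reuse the bounds $r^h\in\bigo(M^{\ld r})$ and $n\in\polylog(M)$ from the proof of Theorem~\ref{prop:polylogRuntime}, divide by $\runtime_\algo(M)\in\Theta(M^\kappa)$, and conclude $\bigo(M^{\ld r})$ is quasi-polynomial because $\ld r\in\bigo(\log\log M)$ under the hypothesis on $r$. The only cosmetic difference is that the paper bounds inefficiency as $\cores\cdot\runtime_{\radonAlgo}/\runtime_\algo$ (Kruskal's processors-times-time formulation) rather than summing the actual work level by level as you do; your accounting is slightly tighter (it drops a harmless factor of $h$ in the aggregation cost) but lands on the same $\bigo(M^{\ld r})$ bound.
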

\begin{proof}
	Let $\algo$ be a consistent and efficient regularised risk minimisation algorithm on a hypothesis space with finite Radon number $r\in\N$. Since $\algo$ is efficient, its runtime $\runtime_{\algo}(M)$ is in $\bigo(M^\kappa)$.
	From the proof of Theorem~\ref{prop:polylogRuntime} follows that, when choosing $h={\left\lceil\frac{1}{k}\left(\ld M - \ld\ld M\right)\right\rceil}$ the~\parallelSchemeName~has a runtime of
	$\runtime_{\radonAlgo}(M)\in\bigo\left(\ld^\kappa M + \ld^{k\kappa}r + r^3\ld M\right)$ using $c\in\bigo\left(M^{\ld r}\right)$ processing units. The inefficiency of the~\parallelSchemeName~then is
	\begin{equation*}
	\begin{split}
	\frac{c \cdot \runtime_{\radonAlgo}(M)}{\runtime_{\algo}(M)}\in\bigo\left(\frac{M^{\ld r}\left(\ld^\kappa M + \ld^{k\kappa}r + r^3\ld M\right)}{M^\kappa}\right) \in \bigo\left(M^{(\ld r) - \kappa}\ld^\kappa M\right)=\bigo\left(M^{\ld r}\right)\enspace .
	\end{split}
	\end{equation*}
	Thus, the inefficiency of the~\parallelSchemeName~is quasi-polynomially bounded or, for short, it has quasi-polynomial inefficiency.
\end{proof}
In order to achieve the same $(\eps,\conf)$-guarantee as the base learning algorithm, the~\parallelSchemeName~requires more data. In the following, we analyse the data inefficiency $\sfrac{\samplesize_\radonAlgo(\eps,\conf)}{\samplesize_\algo(\eps,\conf)}$, i.e., the ratio of the data required by the~\parallelSchemeName~over the data required by the base learning algorithm.
\begin{prop}
	The~\parallelSchemeName~with a consistent and efficient regularised risk minimisation algorithm $\algo$ with sample complexity $\samplesize_\algo(\eps, \conf)$
	on a hypothesis space with finite Radon number $r\in\N$ has a data inefficiency in
	\[
	\Theta\left(\left(\frac{M}{\ld M}\right)^{\frac{\ld r}{k}}\right)\enspace ,
	\]
	where $M=\lceil\samplesize_\algo(\eps, \conf)\rceil$.
\end{prop}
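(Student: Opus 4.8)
The data inefficiency is the ratio $\samplesize_\radonAlgo(\eps,\conf)/\samplesize_\algo(\eps,\conf)$; since $M=\lceil\samplesize_\algo(\eps,\conf)\rceil$ this is $\Theta\bigl(\samplesize_\radonAlgo(\eps,\conf)/M\bigr)$, and the plan is to express it purely in terms of $M$, reusing the value of $h$ fixed in the proof of Theorem~\ref{prop:polylogRuntime}. Recall that, to attain an $(\eps,\conf)$-guarantee, the~\parallelSchemeName~consumes $\samplesize_\radonAlgo(\eps,\conf)=r^h n$ samples in total, where $n=\lceil\samplesize_\algo(\eps,\confBase)\rceil$ is the sample size handed to each of the $r^h$ parallel base instances and the weak confidence $\confBase$ is fixed by $\conf=(r\confBase)^{2^h}$, whereas a single sequential run of $\algo$ requires $M=\lceil\samplesize_\algo(\eps,\conf)\rceil$ samples for the same guarantee. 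So the quantity to estimate is $r^h n/M$.

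First I would substitute the choice $h=\lceil\frac{1}{k}(\ld M-\ld\ld M)\rceil$ of Equation~\ref{eq:proof:thm5:h}, for which $2^{hk}=\Theta(M/\ld M)$. Writing $r=2^{\ld r}$, this already produces the identity that carries the statement:
\[
r^h \;=\; 2^{h\ld r} \;=\; \bigl(2^{hk}\bigr)^{\ld r/k} \;=\; \Theta\!\bigl((M/\ld M)^{\ld r/k}\bigr)\enspace .
\]
Second, I would determine $n$ as a function of $M$ by inverting the sample-complexity formula exactly as in Equations~\ref{eq:proof:thm5:basen}--\ref{eq:proof:thm5:n}: from $\confBase=\frac{1}{r}\conf^{2^{-h}}$ one gets $n^{1/k}=2^{-h}M^{1/k}+\alpha_\eps(1-2^{-h})+\beta_\eps\ld r$, whose leading term for the above $h$ is $2^{-h}M^{1/k}=\Theta((\ld M)^{1/k})$; hence, under the standing assumption that the Radon number $r$ — and therefore $\log_2^k r$ — is polylogarithmic in the sample complexity, and treating $\eps$ as fixed, $n\in\Theta(\ld M)$ (this is the estimate already recorded in Equation~\ref{eq:proof:thm5:n}). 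Third, I would assemble: since $M=\Theta(2^{hk}n)$ up to lower-order terms, $r^h n/M=\Theta(r^h/2^{hk})$, so the data inefficiency is governed by the number of processors $r^h$, giving the claimed $\Theta\!\bigl((M/\ld M)^{\ld r/k}\bigr)$.

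The step I expect to be the main obstacle is the book-keeping in this third step — promoting the informal ``up to lower-order terms'' to a genuine two-sided $\Theta$. One must carry the additive corrections $\alpha_\eps(1-2^{-h})$ and $\beta_\eps\ld r$ through the inversion of $n$ and through both ceilings (on $n$ and on $h=\lceil\cdot\rceil$) in both directions, and verify that the residual factor $n/\samplesize_\algo(\eps,\conf)$ does not shift the exponent produced by $r^h$; this is the delicate point, since $n\in\Theta(\ld M)$ is far smaller than $\samplesize_\algo(\eps,\conf)=\Theta(M)$, so the factor has to be matched against exactly the right normalisation. One also has to invoke the lower bound $M\geq 2^{k\beta_\eps(\alpha_\eps+1)}$ established immediately after Theorem~\ref{prop:polylogRuntime}, which guarantees $\confBase\leq\sfrac{1}{2r}$ in this regime so that the expression used for $n$ is admissible at all. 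Once those routine-but-fiddly estimates are settled, the identity $r^h=\Theta\!\bigl((M/\ld M)^{\ld r/k}\bigr)$ carries the result.
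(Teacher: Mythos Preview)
Your proposal is correct and follows essentially the same route as the paper: both reduce the data inefficiency to $\Theta(r^h/2^{hk})$ and then substitute $h=\lceil k^{-1}(\ld M-\ld\ld M)\rceil$. The only difference is that the paper works from the outset in the simplified $\Theta$-form $\samplesize_0^\algo(\eps,\conf)\in\Theta\bigl((\tfrac{1}{\eps}\ld\tfrac{1}{\conf})^k\bigr)$ (Equation~\ref{eq:app:sampleSizeTheta}), which together with $\ld\tfrac{1}{\confBase}\in\Theta(2^{-h}\ld\tfrac{1}{\conf})$ gives $n\in\Theta(2^{-hk}M)$ in one line and thereby bypasses the explicit inversion and most of the ceiling/additive-term book-keeping you flag as the main obstacle.
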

\begin{proof}
	We assume the sample complexity can be expressed as in Equation~\ref{eq:app:sampleSizeTheta}. For $\conf=(r\confBase)^{2^h}$ we have that
	\begin{equation*}
	\begin{split}
	\samplesize_\radonAlgo(\eps, \conf)=&r^h\samplesize_\algo(\eps,\confBase)\in\Theta\left(r^h\left(\frac{1}{\eps}\ld\frac{1}{\confBase}\right)^k\right)\\
	=&\Theta\left(r^h\left(\frac{1}{2^h}\frac{1}{\eps}\ld\frac{1}{\conf}\right)^k\right)=\Theta\left(\frac{r^h}{2^{hk}}\left(\frac{1}{\eps}\ld\frac{1}{\conf}\right)^k\right)\\
	=&\Theta\left(\frac{r^h}{2^{hk}}\samplesize_\algo(\eps,\conf)\right)=\Theta\left(\frac{r^h}{2^{hk}}M\right)\enspace .
	\end{split}
	\end{equation*}
	Thus, the data inefficiency is in
	\[
	\Theta\left(\frac{r^h}{2^{hk}}\right)\enspace .
	\]
	Choosing $h=\lceil k^{-1}(\ld M - \ld\ld M)\rceil$ as in the proof of Theorem~\ref{prop:polylogRuntime}, this is in
	\[
	\Theta\left(\frac{r^{\frac{1}{k}\ld r\log_r\frac{M}{\ld M}}}{2^{k\frac{1}{k}\ld\frac{M}{\ld M}}}\right) = \Theta\left(\frac{\left(\frac{M}{\ld M}\right)^{\frac{\ld r}{k}}}{\frac{M}{\ld M}}\right) = \Theta\left(\left(\frac{M}{\ld M}\right)^{\frac{\ld r}{k}}\right)
	\]
\end{proof}

\section{Experiments}
\label{appendix:addExps}
This section provides additional details on the experiments conducted. All experiments are performed on a Spark cluster with a master node, $5$ worker nodes, $25$ processors and $64$GB of RAM per node.
The~\parallelSchemeName~is applied with parameter $h=1$ and with the maximal $h$ for a given dataset: Recall, that the number of iterations $h$ is limited by the dataset size (i.e., number of instances) and the Radon number of the hypothesis space, since the dataset is partitioned into $r^h$ parts of size $n$. Thus, given a data set of size $\samplesize$, the maximal $h$ is given by
\[
h_{\text{max}}=\left\lfloor\log_r\frac{\samplesize}{\basesamplesize_{\text{min}}}\right\rfloor\enspace ,
\]
where $\basesamplesize_{\text{min}}$ denotes the minimum size of the local subset of data that each instance of the base learner is executed on. The experiments have been carried out with $\basesamplesize_{\text{min}}=100$.
As discussed in Section~\ref{sec:discussion}, if $r^h$ is larger than the actual number of processing units, some instances of the base learner are executed sequentially.

As base learning algorithms we use the WEKA~\citep{witten2016data} implementation of Stochastic Gradient Descent (\emph{WekaSGD}), and Logistic Regression (\emph{WekaLogReg}), as well as a the Scikit-learn~\cite{scikit-learn} implementation of the linear support vector machine (\emph{LinearSVM}) with pyspark. The paralellisation of a base learner using the~\parallelSchemeName~is denoted~\emph{PRM(h=?)[$<$base learner$>$]}. 

We compare the~\parallelSchemeName~to the natural baseline of aggregating hypotheses by calculating their average, denoted averaging-at-the-end~({\emph{Avg(h=?)[$<$base learner$>$]}}). Given a parameter $h\in\N$, averaging-at-the-end executes the base learning algorithm on $r^h$ subsets of the data, i.e., on the same number of subsets as the~\parallelSchemeName. Accordingly, the runtime for obtaining the set of hypotheses is similar, but the time for aggregating the models is shorter, since averaging is less computationally expensive than calculating the Radon point.
\begin{table}[ht]
	\begin{center}
		\begin{tabular}		
			{l r r l}
			\textbf{Name}	 		& \textbf{Instances} 	& \textbf{Dimensions}	& \textbf{Output} \\[0.15cm]	
			click\_prediction		& $1\,496\,391$ 	& 11 	& $\outputspace=\{-1,1\}$		\\[0.15cm]	
			poker	 				& $1\,025\,010$ 	& 10	& $\outputspace=\{-1,1\}$ 		\\[0.15cm]	
			SUSY 					& $5\,000\,000$ 	& 18 	& $\outputspace=\{-1,1\}$		\\[0.15cm]	
			Stagger1 				& $1\,000\,000$ 	& 9 	& $\outputspace=\{-1,1\}$ 		\\[0.15cm]	
			HIGGS 					& $11\,000\,000$ 	& 28	& $\outputspace=\{-1,1\}$		\\[0.15cm]	
			SEA\_50					& $1\,000\,000$ 	& 3 	& $\outputspace=\{-1,1\}$ 		\\[0.15cm]									
			codrna 					& $488\,565$ 	& 8 	& $\outputspace=\{-1,1\}$ 		\\[0.15cm]			
			CASP9					& $31\,993\,555$ 	& 631 	& $\outputspace=\{-1,1\}$ 		\\[0.15cm]
			wikidata				& $19\,254\,100$ 	& 2331	& $\outputspace=\{-1,1\}$ 		\\[0.15cm]
			20\_newsgroups			& $399\,940$ 	& 1002	& $\outputspace=\{-1,1\}$ 		\\[0.15cm]
			YearPredictionMSD		& $515\,345$ 	& 90 	& $\outputspace\subseteq\R$		\\[0.15cm]
			drift					& $13\,991$ 	& 90 	& $\outputspace=\{1,\dots,89\}$	\\[0.15cm]	
			spoken-arabic-digit		& $263\,256$ 	& 15 	& $\outputspace=\{1,\dots,10\}$	\\[0.15cm]																													
			\vspace{0.1cm}
		\end{tabular}
		\caption{Description of the datasets used in our experiments.}
		\label{tbl:datasets}
	\end{center}
\end{table}

We also compare the~\parallelSchemeName~to parallel machine learning algorithms from the Spark machine learning library (MLlib): SparkMLLibLogisticRegressionWithLBFGS (\emph{SparkLogRegwLBFGS}), SparkMLLibLogisticRegressionWithSGD (\emph{SparkLogRegwSGD}), SparkMLLibSVMWithSGD (\emph{SparkSVMwSGD}), and SparkMLLogisticRegression (\emph{SparkLogReg}). 

The properties of the datasets used in the empirical evaluation are presented in Table~\ref{tbl:datasets}. Datasets have been acquired from OpenML~\citep{OpenML2013}, the UCI machine learning repository~\citep{Lichman:2013}, and Big Data competition of the ECDBL'14 workshop\footnote{Big Data Competition 2014: \url{http://cruncher.ncl.ac.uk/bdcomp/}}. Experiments on moderately sized datasets---on which we compare the~\parallelSchemeName~to the base learning algorithms executed on the entire dataset are conducted on the datasets click\_prediction, poker, SUSY, Stagger1, SEA\_50, and codrna. The comparison of~\parallelSchemeName~and Spark MLlib learners is executed on the datasets CASP9, HIGGS, wikidata, 20\_newsgroups, and SUSY. The regression experiment is conducted using the YearPredictionMSD dataset, multiclass-prediction experiments using the drift, and spoken-arabic-digit datasets.

In the following, we provide more details on the experiments presented in Figures~\ref{fig:expBoxPlotsRuntimeAUCCentral}, \ref{fig:radonVsAverage}, and \ref{fig:expBoxPlotsRuntimeAUCSpark} in Section~\ref{sec:experiments}. In particular, we analyse the trade-off between training time and AUC per dataset. 
\begin{figure*}[t]
	\centering	
	\includegraphics[width=13cm]{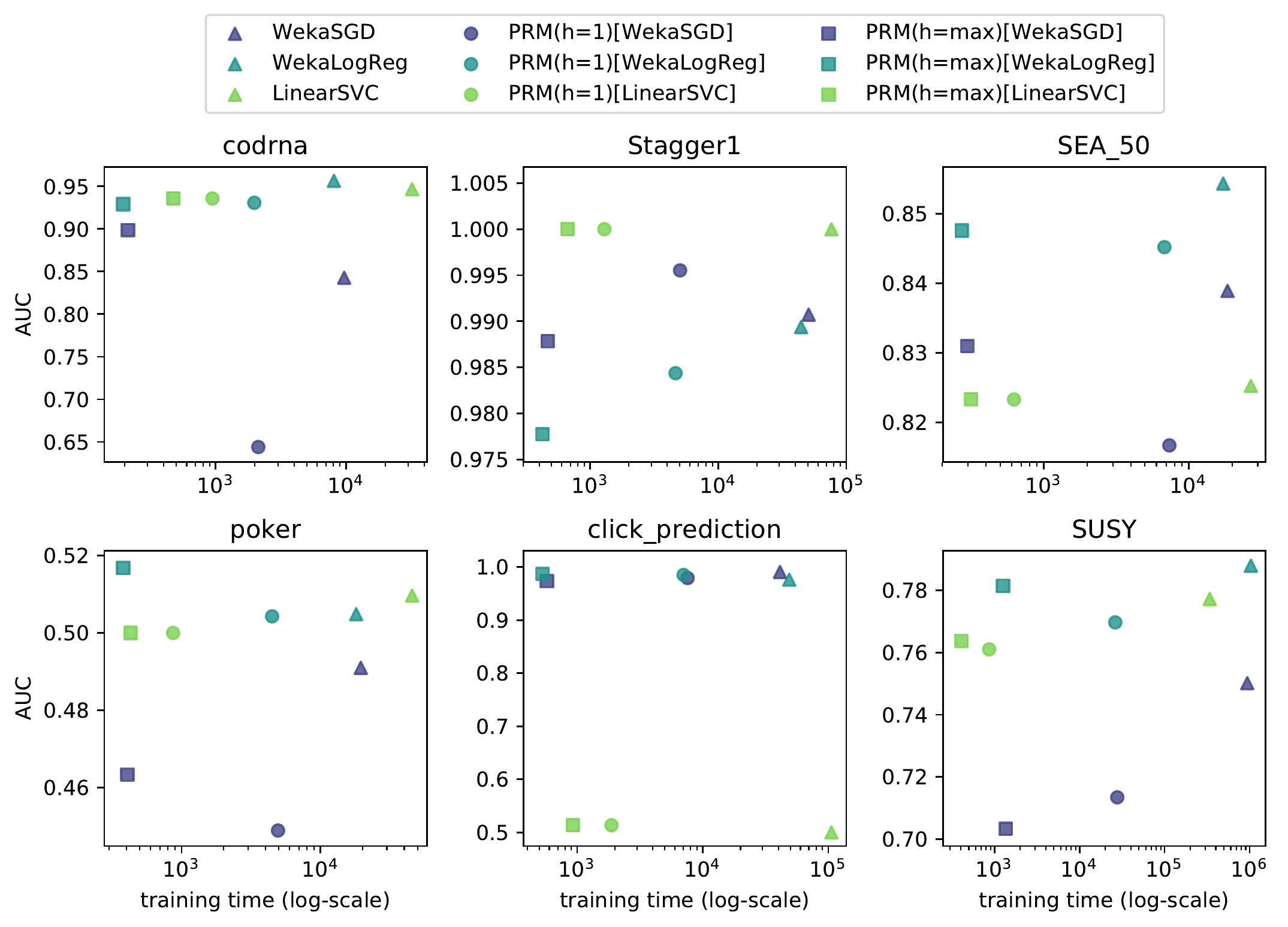}
	\caption{AUC vs. training time for base learning algorithms and their parallelisation with the~\parallelSchemeName~per dataset from the same experiment as in Figure~\ref{fig:expBoxPlotsRuntimeAUCCentral}.}
	\label{fig:singleVsRadonRtVsAucPerDataset}
\end{figure*}
\begin{figure*}[ht]
	\centering	
	\includegraphics[width=13cm]{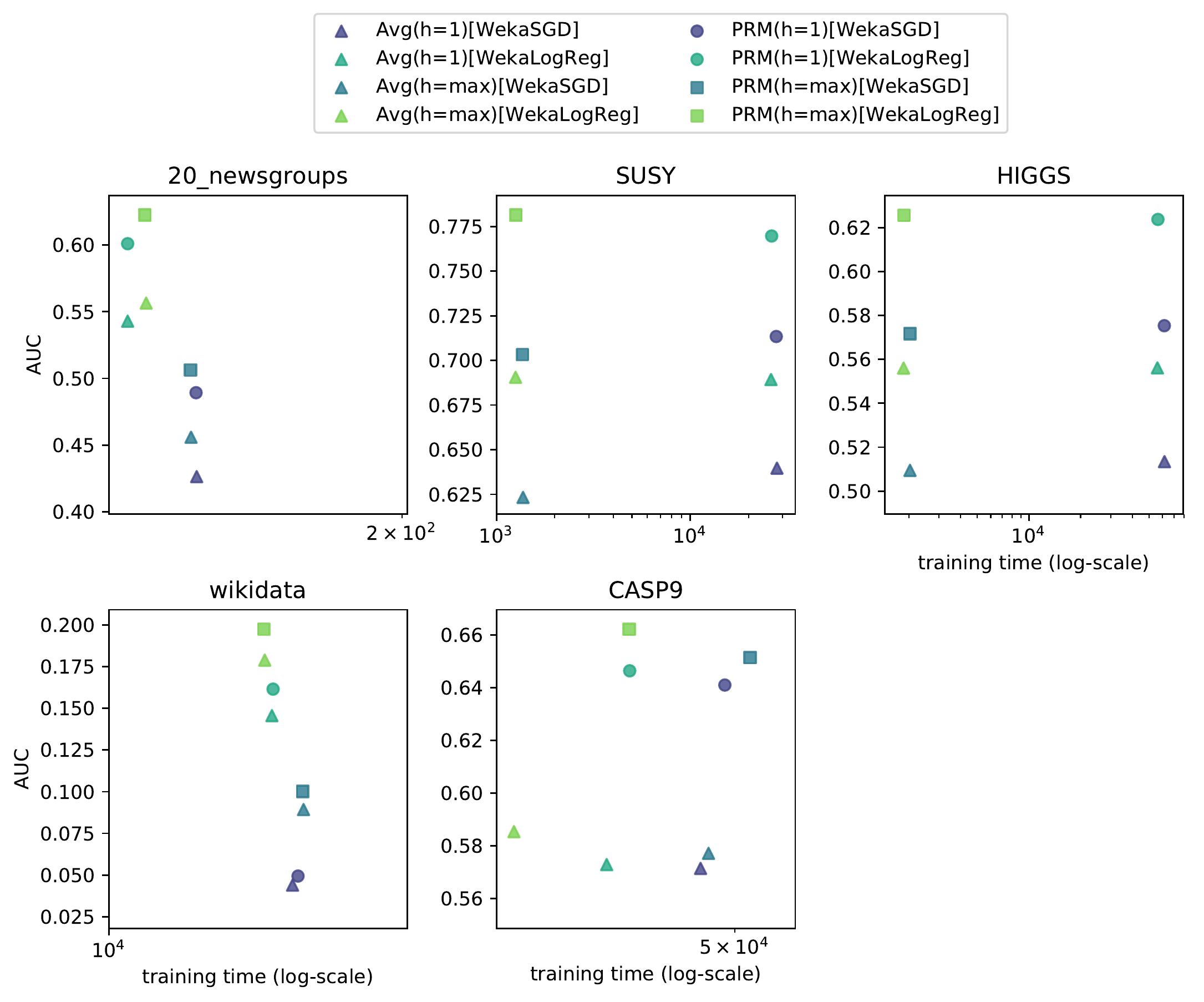}
	\caption{AUC vs. training time for the parallelisation of base learning algorithms using the averaging-at-the-end baseline (\emph{Avg}) and the~\parallelSchemeName~per dataset from the same experiment as in Figure~\ref{fig:radonVsAverage}.}
	\label{fig:avgVsRadonRtVsAucPerDataset}
\end{figure*}

Figure~\ref{fig:singleVsRadonRtVsAucPerDataset} shows the trade-off between training time and AUC for base learning algorithms and their parallelisation using the~\parallelSchemeName. It confirms that the training time for the~\parallelSchemeName~is orders of magnitude smaller than the base learning algorithms on all datasets. Moreover, the training time is substantially smaller for the~\parallelSchemeName~with maximal height ($h=max$), compared to the parameter $h=1$. In terms of AUC, the performance of the parallelisation is comparable to the base learner for WekaLogReg and LinearSVC on all datasets. For the base learner WekaSGD, the predictive performance of the parallelisation with the~\parallelSchemeName~is comparable on all datasets but codrna. There, the~\parallelSchemeName~with parameter $h=1$ has substantially lower AUC, while the parallelisation with $h=max$ has substantially higher AUC than the base learning algorithm executed on the entire dataset. 

The comparison of the~\parallelSchemeName~to the averaging-at-the-end baseline in Figure~\ref{fig:avgVsRadonRtVsAucPerDataset} confirms the findings of Section~\ref{sec:experiments}, i.e., the~\parallelSchemeName~achieves a substantially higher AUC with only slightly higher runtime. Comparing the~\parallelSchemeName~to the Spark MLlib learning algorithms in Figure~\ref{fig:sparkVsRadonRtVsAucPerDataset} indicates that the~\parallelSchemeName~is always favourable in terms of training time. However, in terms of AUC the results are mixed: For the base learner WekaLogReg, its parallelisation is always among the best in terms of AUC. The parallelisation of WekaSGD, however, has worse performance than the Spark learners on $2$ out of $5$ datasets. It also confirms that for the datasets SUSY and HIGGS, the runtime of the~\parallelSchemeName~with $h=1$ is substantially larger than for $h=max$. Thus, for the best performance in terms of runtime and AUC, the height should be maximal.
\begin{figure*}[h!]
	\centering	
	\includegraphics[width=13cm]{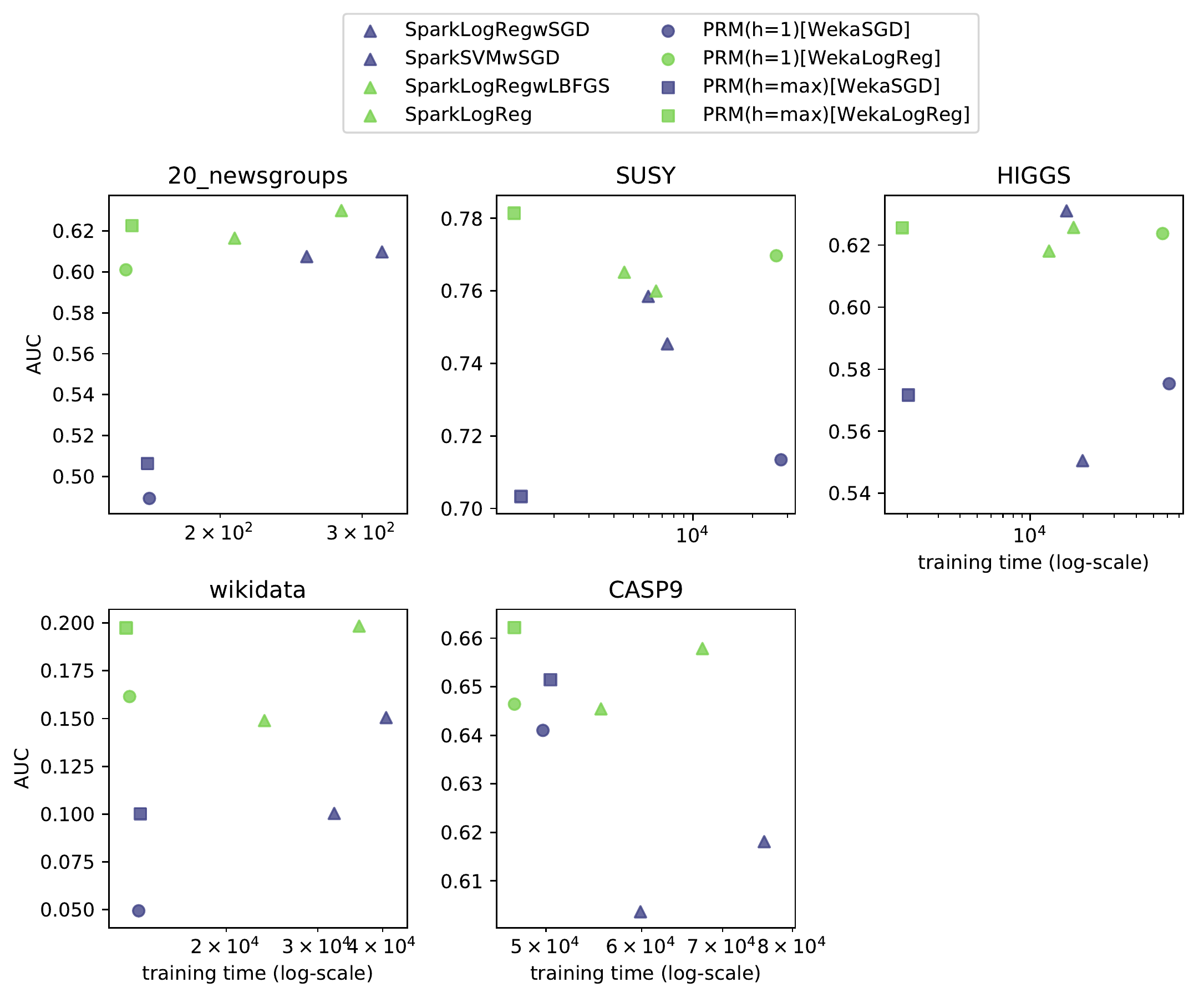}
	\caption{AUC vs. training time for Spark learners and parallelisations of comparable base learning algorithms with the~\parallelSchemeName~per dataset from the same experiment as in Figure~\ref{fig:expBoxPlotsRuntimeAUCSpark}.}
	\label{fig:sparkVsRadonRtVsAucPerDataset}
\end{figure*}

In order to investigate the results depicted in Figure~\ref{fig:sparkVsRadonRtVsAucPerDataset} more closely, we provide the training times and AUCs in detail in Table~\ref{tbl:sparkExpResults}. 
As mentioned above, the~\parallelSchemeName~using WekaLogReg as base learner has better runtime than all Spark algorithms.  At the same time, this version of the~\parallelSchemeName~outperforms the Spark algorithms in terms of AUC on all datasets but 20\_newsgroups---there it is $2.2\%$ worse than the best Spark algorithm. In particular, on the largest dataset in the experiments---the CASP9 dataset with $32$ million instances and $631$ features---the~\parallelSchemeName~is $15\%$ faster and $2.6\%$ better in terms of AUC than the best Spark algorithm. 

\begin{figure*}[ht]
	\centering	
	\subfigure[]{\label{fig:expRuntimeOfRepartition}\includegraphics[width=7cm]{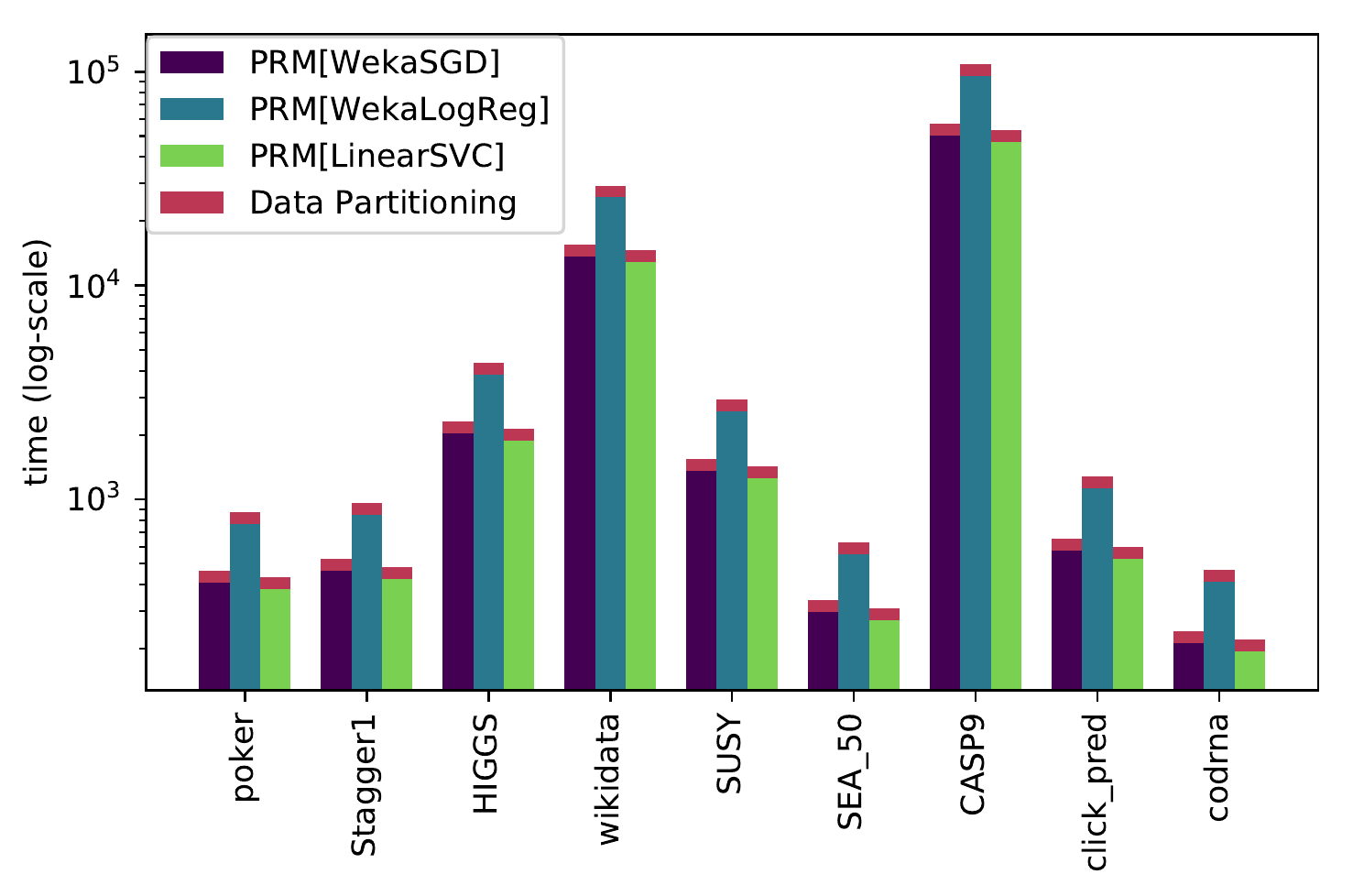}}\hfill%
	\subfigure[]{\label{fig:expSparkWithDataSplit}\includegraphics[height=7cm]{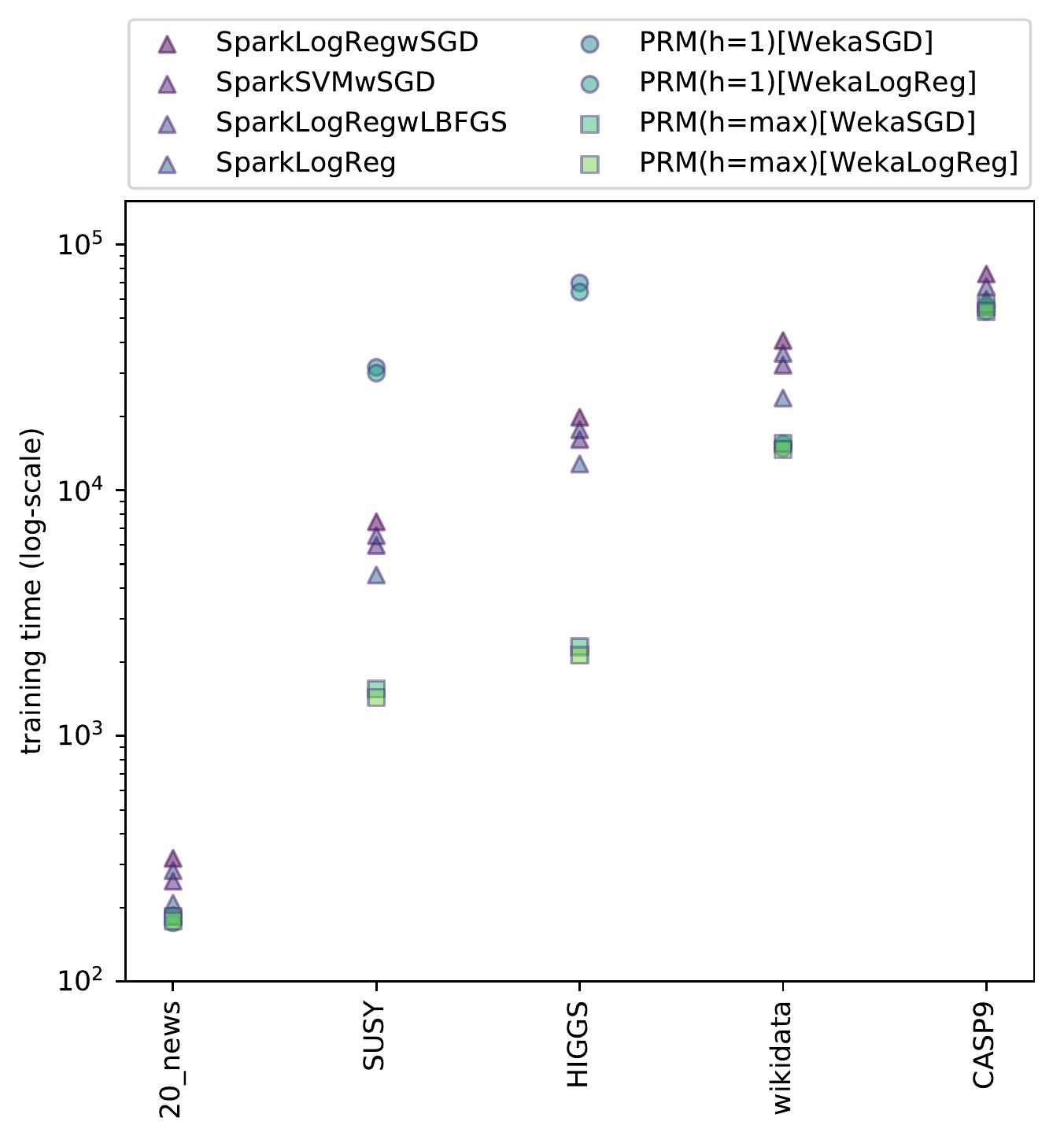}}\hfill%
	\caption{
		\subref{fig:expRuntimeOfRepartition} Runtime of the~\parallelSchemeName~together with the time required for repartitioning the data to fit the parallelisation scheme.~		
		\subref{fig:expSparkWithDataSplit} Runtime and AUC of several Spark machine learning library algorithms and the~\parallelSchemeName~including the time required for repartitioning the data before training.
	}
\end{figure*}
Note that for HIGGS and SUSY, the~\parallelSchemeName~with $h=1$ is an order of magnitude slower than with $h=max$ as well as the Spark algorithms. This follows from the low degree of parallelisation, since for $h=1$ only $20$ (for SUSY), respectively $30$ (for HIGGS) hypotheses have to be generated. Thus, only $20$, or $30$ of the $150$ available processors are used in parallel. At the same time, the amount of data each processor has to process is orders of magnitude larger than for $h=max$. 
\begin{table}[b]
	\tiny
	\begin{center}
		\begin{tabularx}		
			{\columnwidth}{XYYYYYYYY}
			\textbf{Dataset}&	\multicolumn{8}{c}{\textbf{Runtime}} \\[0.15cm]	
			&	\textbf{SparkLogReg\newline wSGD}	&	\textbf{SparkSVM\newline wSGD}	&	\textbf{PRM(h=1)\newline [WekaSGD]}	& \textbf{PRM(h=max)\newline [WekaSGD]}	&	\textbf{SparkLogReg\newline wLBFGS}	&	\textbf{SparkLogReg}	&	\textbf{PRM(h=1)\newline [WekaLogReg]}		&	\textbf{PRM(h=max)\newline [WekaLogReg]}\\[0.15cm]	
			20\_newsgroups	&	$317.7$ 	&	$256.2$				&	$163.4$ 	&	$162.5$ 	&	$282.9$ 		&	$208.5$ 	&	$\mathbf{152.8}$	&	$155.4$ \\
			SUSY			&	$7\,439.5$ 	&	$5\,961.8$ 			&	$27\,781.6$ &	$1\,363.7$ 	&	$6\,526.3$ 		&	$4\,516.8$ 	&	$26\,299.6$ 		&	$\mathbf{1259.7}$ \\
			HIGGS			&	$19\,815.1$	&	$16\,071.9$ 		&	$61\,429.5$ &	$2\,029.7$ 	&	$17\,617.4$		&	$12\,783.6$	&	$56\,394.2$ 		&	$\mathbf{1876.2}$ \\
			wikidata		&	$40\,645.8$	&	$32\,288.5$ 		&	$13\,575.7$ &	$13\,677.3$ &	$36\,060.1$		&	$23\,702.0$	&	$13\,039.5$ 		&	$\mathbf{12845.5}$ \\
			CASP9			&	$75\,782.4$	&	$59\,864.7$ 		&	$49\,711.5$ &	$50\,430.6$ &	$67\,367.3$ 	&	$55\,523.5$ &	$47\,085.1$ 		&	$\mathbf{47070.1}$\\[0.15cm]	
			\hline\\[0.001cm]			
			&	\multicolumn{8}{c}{\textbf{AUC}} \\[0.15cm]					
			20\_newsgroups	&	$0.6098$ 	&	$0.6075$ 			&	$0.4893$ 	&	$0.5063$ 	&	$\mathbf{0.63\phantom{1}\phantom{1}}$ &	$0.6165$ 	&	$0.601\phantom{1}$ 			&	$0.6226$\\
			SUSY			&	$0.7454$ 	&	$0.7585$ 			&	$0.7134$ 	&	$0.7033$ 	&	$0.76\phantom{1}\phantom{1}$ 			&	$0.7652$ 	&	$0.7697$ 			&	$\mathbf{0.7814}$\\
			HIGGS			&	$0.5506$ 	&	$\mathbf{0.631\phantom{1}}$ 	&	$0.5753$ 	&	$0.5717$ 	&	$0.6257$ 		&	$0.6181$ 	&	$0.6237$ 			&	$0.6256$\\
			wikidata		&	$0.1505$ 	&	$0.1004$ 			&	$0.0494$ 	&	$0.1002$ 	&	$0.1983$ 		&	$0.1489$ 	&	$0.1615$ 			&	$\mathbf{0.1974}$\\
			CASP9			&	$0.6181$ 	&	$0.6037$ 			&	$0.641$ 	&	$0.6514$ 	&	$0.6579$ 		&	$0.6454$ 	&	$0.6464$ 			&	$\mathbf{0.6622}$\\
			\vspace{0.1cm}
		\end{tabularx}
		\caption{Runtime and AUC of Spark machine learning library algorithms and the~\parallelSchemeName~using WekaSGD and WekaLogReg as base learning algorithms. The results, reported for each dataset, are the average over all folds in a $10$-fold cross-validation. These results correspond to the ones presented in Figure~\ref{fig:expBoxPlotsRuntimeAUCSpark} in Section~\ref{sec:experiments}.}
		\label{tbl:sparkExpResults}
	\end{center}
\end{table}

For the above experiments we assume that the data is already distributed over the nodes in the cluster so that it can directly be processed by the~\parallelSchemeName. 
When loading data in Spark, this data is distributed over the worker nodes in subsetsbut not necessarily in $r^h$ subsets. In Spark, distributed data is organised in partitions, where each partition corresponds to the subset of data available to one instance of the base learning algorithm. In order to apply the~\parallelSchemeName~to a dataset within the Spark framework, the data needs to re-distributed and partitioned into $r^h$ partitions which is achieved by a method called repartition.  In the experiments in Section~\ref{sec:experiments}, we assume that the data is already partitioned to make a fair comparison to the Spark learning algorithms which do not require repartitioning.
Figure~\ref{fig:expRuntimeOfRepartition} illustrates the time required for repartitioning a dataset in contrast to the runtime of the~\parallelSchemeName. Repartitioning in Spark always includes a complete shuffling of the data, requiring communication to redistribute the dataset. This is rather inefficient in our context. Nonetheless, the time required for repartitioning is small compared to the overall runtime---in the worst case it takes $14\%$ of the runtime of the~\parallelSchemeName. Still, taking into account the time for repartitioning the data shrinks the runtime advantage of the proposed scheme over the Spark algorithms. Figure~\ref{fig:expSparkWithDataSplit} shows the runtimes of the Spark algorithms compared to the~\parallelSchemeName---similar to Figure~\ref{fig:expBoxPlotsRuntimeAUCSpark} in Section~\ref{sec:experiments}---but with the time required for repartitioning the data added to the runtime of the \emph{Radon machines}. The~\parallelSchemeName~with $h=max$ remains superior to the Spark algorithms in terms of runtime.

\section{Practical Aspects}

\subsection{Radon Point Construction}
\label{sec:app:radonPointConstr}
In the following, a simple construction is given for a system of linear equations with which a Radon point of a set can be determined. In his main theorem, Radon~\cite{radon1921mengen} gives the following construction of a Radon point for a set $S=\{s_1,...,s_r\}\subseteq\R^d$. Find a non-zero solution $\lambda\in\R^{|S|}$ for the following linear equations.
\[
\sum_{i=1}^r\lambda_i s_i = (0,\dots,0)\enspace , \enspace \sum_{i=1}^r \lambda_i = 0
\]
Such a solution exists, since $|S| > d+1$ implies that $S$ is linearly dependent. Then, let $I,J$ be index sets such that for all $i\in I: \lambda_i\geq 0$ and for all $j\in J: \lambda_j < 0$. Then a Radon point is defined by
\[
\radonPoint(\lambda) = \sum_{i\in I}\frac{\lambda_i}{\Lambda}s_i = \sum_{j\in J}\frac{\lambda_j}{\Lambda}s_j\enspace ,
\]
where $\Lambda = \sum_{i\in I}\lambda_i = -\sum_{j\in J}\lambda_j$. Any solution to this linear system of equations is a Radon point. The equation system can be solved in time $r^3$. By setting the first element of $\lambda$ to one, we obtain a unique solution of the system of linear equations. Using this solution $\lambda$, we define the Radon point of a set $S$ as $\radonPoint(S)=\radonPoint(\lambda)$ in order to resolve ambiguity.

\subsection{Consistency Results for Empirical Risk Minimisation}
\label{sec:appendix:consistency}
In this section we provide some technical results on the consistency of empirical risk minimisation algorithms. 
\begin{lm}
	For consistent empirical risk minimisers with a hypothesis class of finite Vapnik-Chervonenkis (VC) dimension the sample size required to achieve an $(\eps,\conf)$-guarantee is given by $\samplesize(\conf)=(\alpha_\eps+\beta_\eps\log_2\sfrac{1}{\conf})^k$ with $\alpha_\eps = 4\ln 2\sfrac{1}{\eps^2},\beta_\eps=\sfrac{4}{\eps^2\log_2 e}$ and $k=2$.
	\label{lm:finiteVC}
\end{lm}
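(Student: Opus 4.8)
The plan is to treat this as a routine VC-theory computation: reduce the regret guarantee for the empirical risk minimiser to a uniform-deviation statement, control the deviation by the classical symmetrisation-plus-growth-function inequality, and then solve the resulting inequality for $\samplesize$ and rewrite it in the template form $\bigl(\alpha_\eps+\beta_\eps\log_2\tfrac1\conf\bigr)^{k}$. First I would record the elementary reduction: assuming the loss is bounded (rescale to $[0,1]$), for the ERM output $\solution=\algo(\dataset)$ one has $\qualityOf{\solution}\le 2\sup_{\solution'\in\solutionset}\bigl|\tfrac1\samplesize\sum_{(\inputvar,\outputvar)\in\dataset}\loss(\solution',\inputvar,\outputvar)-\Exp[\loss(\solution',\inputvar,\outputvar)]\bigr|$, via the usual three-term split and the defining inequality of the minimiser. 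Hence it suffices to make the uniform deviation at most $\eps/2$ with probability at least $1-\conf$.

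Next I would invoke the classical Vapnik--Chervonenkis uniform-convergence bound: for a class of VC dimension $V$, $\prob\bigl(\sup_{\solution'}|\cdots|>\tfrac\eps2\bigr)\le c_1\,\Pi_{\solutionset}(c_2\samplesize)\exp(-c_3\samplesize\eps^2)$ for absolute constants $c_1,c_2,c_3$, together with the Sauer--Shelah bound $\Pi_{\solutionset}(m)\le(em/V)^{V}$. Setting the right-hand side equal to $\conf$ and taking base-$2$ logarithms yields an inequality of the shape $c_3\samplesize\eps^2/\ln 2\ \ge\ \log_2\tfrac1\conf+\log_2 c_1+V\log_2(c_2 e\samplesize/V)$, which is transcendental in $\samplesize$ because of the $\log_2\samplesize$ term from the growth function. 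Inverting it with a standard tangent-line bound $\ln x\le\gamma x-1-\ln\gamma$ (choosing $\gamma$ so that the induced $V$-dependent contribution is absorbed into the $\samplesize$-free part) gives a closed-form sufficient lower bound on $\samplesize$ that is affine in $\log_2\tfrac1\conf$; re-expressing this affine quantity as a square --- legitimate because $\log_2\tfrac1\conf\ge1$, since $\conf\le\tfrac1{2r}<\tfrac12$ is assumed throughout --- produces the template with exponent $k=2$, and carrying the constants through the chosen Hoeffding constant $c_3$ and the tangent-line parameter $\gamma$ pins down $\alpha_\eps=4\ln2/\eps^2$ and $\beta_\eps=4/(\eps^2\log_2 e)$. (Alternatively one may simply cite the sample-complexity upper bound $\bigo((V+\log\tfrac1\conf)/\eps^2)$ of the fundamental theorem of statistical learning and pad it into the stated square, but deriving it directly is what produces the exact constants.)

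I expect the \textbf{main obstacle} to be pure bookkeeping rather than any conceptual difficulty: eliminating the $\log_2\samplesize$ contributed by the $(\cdot)^{V}$ prefactor while keeping the constants exactly as claimed, and then casting an affine-in-$\log_2\tfrac1\conf$ expression into the square $\bigl(\alpha_\eps+\beta_\eps\log_2\tfrac1\conf\bigr)^2$ so that both $\alpha_\eps$ and $\beta_\eps$ come out as stated. This is also the only place where $V$ enters --- it is absorbed into $\alpha_\eps$ (a constant for the fixed low-dimensional Euclidean hypothesis spaces considered), which is why only the $\eps$-dependence is displayed; one should check that the resulting bound is valid for the whole range of $\eps$ of interest, which is where the crude constant-inflation steps are used.
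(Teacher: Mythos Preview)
Your approach is the same one the paper takes---invoke the standard VC uniform-convergence bound and invert it for $\samplesize$---but you are doing considerably more work than the paper actually does. The paper's proof is a two-line sketch: it cites the bound $\conf=2\,\mathcal{N}(\solutionset,\samplesize)\exp(-\samplesize\eps^2/4)$ from a survey, remarks that the shattering coefficient $\mathcal{N}(\solutionset,\samplesize)$ is polynomial in $\samplesize$ for finite VC dimension, and then ``solves for $\samplesize$'' to obtain the affine expression $\samplesize\ge\eps^{-2}\bigl(\ln 2+4(\log_2 e)^{-1}\log_2\tfrac1\conf\bigr)$. In particular, the paper does \emph{not} carry out your tangent-line inversion to eliminate the $\log_2\samplesize$ contribution from the growth function; it simply drops that term.

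Your careful handling of the Sauer--Shelah prefactor and the explicit padding of the affine bound into a square are thus additions beyond the paper's argument rather than deviations from it. Note also that the displayed bound in the paper's own proof is affine in $\log_2\tfrac1\conf$, so the claimed exponent $k=2$ in the lemma statement is not actually derived there either; your observation that one must artificially square the affine quantity to match the stated template is exactly the step the paper glosses over. In short: same route, but your version is the rigorous one and the paper's is an informal sketch that leaves both the shattering-coefficient term and the $k=2$ packaging unjustified.
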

\begin{proof}
	For consistent empirical risk minimisers with finite VC-dimension, the confidence $1-\conf$ for a given $\samplesize$ and $\eps$ is $\conf=2\mathcal{N}(\solutionset,\samplesize)\exp(-\samplesize\eps^2/4)$~\cite{luxburg2009statistical}, where the shattering coefficient $\mathcal{N}(\solutionset,\samplesize)$ is a polynomial in $\samplesize$ for finite VC-dimension. Solving for $\samplesize$ yields that the algorithm run with 
	\[
	\samplesize \geq \frac{1}{\eps^2}\left(\ln 2 + 4\frac{1}{\log_2(e)}\log_2\frac{1}{\delta}\right)
	\]
	achieves a confidence larger or equal to the desired $1-\conf$.
\end{proof}
\begin{lm}
	For consistent empirical risk minimisers with a hypothesis class of finite Rademacher complexity the sample size required to achieve an $(\eps,\conf)$-guarantee is given by $\samplesize(\conf)=(\alpha_\eps+\beta_\eps\log_2\sfrac{1}{\conf})^k$ with $\alpha_\eps = 0,\beta_\eps=\sfrac{1}{2(\eps+2\rho)^2}$ and $k=1$, where $\rho$ denotes the Rademacher complexity.
	\label{lm:finiteRademacher}
\end{lm}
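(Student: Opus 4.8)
The plan is to mirror the proof of Lemma~\ref{lm:finiteVC}, but to replace the VC/shattering estimate by a Rademacher-complexity generalisation bound~\citep{bartlett2003rademacher} and then solve the resulting inequality for $\samplesize$. Throughout, $\rho$ denotes an upper bound on the Rademacher complexity of the loss-composed hypothesis class $\{(\inputvar,\outputvar)\mapsto\loss(\solution,\inputvar,\outputvar):\solution\in\solutionset\}$ (absorbing a Lipschitz-contraction step if one prefers to state $\rho$ for $\solutionset$ alone), and I assume the loss takes values in $[0,1]$, the normalising constant otherwise entering the bounds.

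First I would control the uniform deviation $\Phi(\dataset)=\sup_{\solution\in\solutionset}\big(\Exp[\loss(\solution,\inputvar,\outputvar)]-\tfrac{1}{\samplesize}\sum_{(\inputvar,\outputvar)\in\dataset}\loss(\solution,\inputvar,\outputvar)\big)$. By the standard symmetrisation argument, $\Exp_{\dataset}[\Phi(\dataset)]\le 2\rho$. Since $\Phi$ is a function of the $\samplesize$ i.i.d.\ examples whose value changes by at most $1/\samplesize$ when any single example is replaced (the loss being bounded by $1$), McDiarmid's inequality gives, for every $t\ge 0$, $\prob(\Phi(\dataset)\ge 2\rho+t)\le\prob(\Phi(\dataset)\ge\Exp[\Phi]+t)\le\exp(-2\samplesize t^2)$.

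Second I would relate $\Phi$ to the regret of the empirical minimiser $\algo(\dataset)$. Writing $\qualityOf{\algo(\dataset)}$ as the telescoping sum of (true risk at $\algo(\dataset)$ minus its empirical risk), (empirical risk at $\algo(\dataset)$ minus empirical risk at $\optSolution$), and (empirical risk at $\optSolution$ minus true risk at $\optSolution$): the middle term is nonpositive since $\algo(\dataset)$ minimises the empirical objective, the first term is at most $\Phi(\dataset)$, and the third is a single-function fluctuation controlled by Hoeffding's inequality in the same $\exp(-2\samplesize t^2)$ form. Combining these via a union bound, with probability at least $1-\conf$ one obtains $\qualityOf{\algo(\dataset)}\le 2\rho+\Theta\big(\sqrt{\ld(1/\conf)/\samplesize}\,\big)$.

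Finally, requiring this right-hand side to be at most $\eps$ and solving for $\samplesize$ produces a sample size of the claimed shape $\samplesize(\conf)=(\alpha_\eps+\beta_\eps\ld\tfrac{1}{\conf})^k$ with $k=1$ and $\alpha_\eps=0$, and reading off $\beta_\eps$ just amounts to collecting the constants emitted by McDiarmid and Hoeffding and converting the natural logarithm to $\ld$ (which only weakens the bound). I expect the only genuine obstacle to be exactly this bookkeeping step: tracking the factor multiplying $\rho$, deciding how the $\optSolution$-fluctuation term is merged with $\Phi$ (whether it doubles a constant or is absorbed into the union bound), and noting that the resulting $\beta_\eps$ is finite and the guarantee non-vacuous only once $\eps$ is large relative to $2\rho$ --- so the work lies in assembling the standard pieces so that the constants land on the stated $\alpha_\eps,\beta_\eps,k$, not in any single inequality.
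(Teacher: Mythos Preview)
Your proposal is correct and follows the paper's approach exactly: the paper's proof simply quotes the textbook Rademacher bound $\eps=2\rho+\sqrt{\ld(1/\conf)/(2\samplesize)}$ (citing Luxburg) and solves for $\samplesize$, which is precisely what you do after re-deriving that same bound via symmetrisation and McDiarmid. Your caution that the guarantee is non-vacuous only for $\eps$ large relative to $2\rho$ is well placed---solving the quoted bound actually yields $\beta_\eps=1/(2(\eps-2\rho)^2)$ rather than the stated $1/(2(\eps+2\rho)^2)$, so the sign in the lemma's $\beta_\eps$ appears to be a slip.
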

\begin{proof}
	For consistent empirical risk minimisers with a hypothesis class of finite Rademacher complexity $\rho$, a given $\conf$ and $\samplesize$ the error bound is given by $\eps=2\rho+\sqrt{\sfrac{\log_2\sfrac{1}{\delta}}{2\samplesize}}$~\cite{luxburg2009statistical}. Solving for $\samplesize$ yields the above result.
\end{proof}
\newpage
\bibliographystyle{plainnat}
\bibliography{bibliography}
\end{document}